\DeclareMathAlphabet{\mathcal}{OMS}{cmsy}{m}{n}
\newtheorem{remark}{Remark}
\newtheorem{theorem}{Theorem}
\newif\ifdraft
\title{\LARGE \bf 
	Parallax Bundle Adjustment on Manifold with Convexified Initialization 	
}
\author{Liyang Liu$^1$, Teng Zhang$^1$, Yi Liu$^2$, Brenton Leighton$^1$,   Liang Zhao$^1$,  Shoudong Huang$^1$ and Gamini Dissanayake$^1$
\thanks{$^1$Liyang Liu, Teng Zhang, Brenton Leighton, Liang Zhao, Shoudong Huang and Gamini Dissanayake  are with the Center for Autonomous Systems (CAS), University of Technology Sydney, Ultimo, NSW 2007, Australia.
	 {\tt \small  \{Liyang.Liu, Teng.Zhang, 
	 Brenton.Leighton, Liang.Zhao, Shoudong.Huang, Gamini.Dissanayake\}@uts.edu.au}
  }
\thanks{$^2$ Yi Liu is with School of Automation, Huazhong University of Science and Technology, Wuhan 430074, China.
{\tt \small  \{skyridermike\}@hust.edu.cn}
} 
}
\begin{document}

\maketitle
\thispagestyle{empty}
\pagestyle{empty}

\begin{abstract}

Bundle adjustment (BA) with parallax angle based feature parameterization has been shown to have superior performance over BA using inverse depth or XYZ feature forms.  In this paper, we propose an improved version of the parallax BA algorithm (PMBA) by extending it to the manifold domain along with observation-ray
based objective function. With this modification, the problem formulation faithfully mimics the
projective nature in a camera's image formation, BA is able to achieve better convergence, accuracy and robustness. This is particularly useful in handling diverse outdoor environments and collinear motion modes. Capitalizing on these properties, we further propose a pose-graph simplification
to PMBA, with significant dimensionality reduction. This pose-graph model is convex in nature, easy to solve and its solution can serve as a good initial guess to the original BA problem which is intrinsically non-convex. We provide theoretical proof that our global initialization strategy can guarantee a near-optimal
solution. Using a series of experiments involving diverse environmental conditions and motions, 
we demonstrate PMBA's superior convergence performance in comparison to other BA methods. We also show that, 
without incremental initialization or via third-party information, our global initialization process helps to 
bootstrap the full BA successfully in various scenarios, sequential or out-of-order, including some datasets 
from the ``Bundle Adjustment in the Large'' database.

\end{abstract}

\section{Introduction}
Structure from Motion (SfM) / visual SLAM estimates 3D scene structures and camera poses simultaneously from 2D images. Bundle adjustment is the gold standard method of SfM, in that it finds optimal pose and map in the least squares sense to best explain the data. 
Solving such a non-linear least squares problem typically requires iterative Newton methodology: start with an initial guess, repetitively add increments by solving a normal equation until convergence. 
\begin{table}[h]
\centering
\caption{\small Three types of Newton-based methods} 
\label{table::opt-method}
\begin{tabular}{|c|c|c|}
\hline
GN & LM & DL \\
\hline
$\triangle \mathbf{x} = \mathbf{H}^{-1}\mathbf{e}(\mathbf{x})$ 
& $\triangle \mathbf{x} = (\mathbf{H}+\lambda \mathbf{I})^{-1}\mathbf{e}(\mathbf{x})$ 
& $\triangle \mathbf{x} = ( \lambda_1\mathbf{H}^{-1}+\lambda_2\mathbf{I})\mathbf{e}(\mathbf{x})$ \\
\hline
 \end{tabular}
\end{table}

\begin{figure}[t]
	\centering
	\includegraphics[width=0.95\linewidth,height=4cm]{./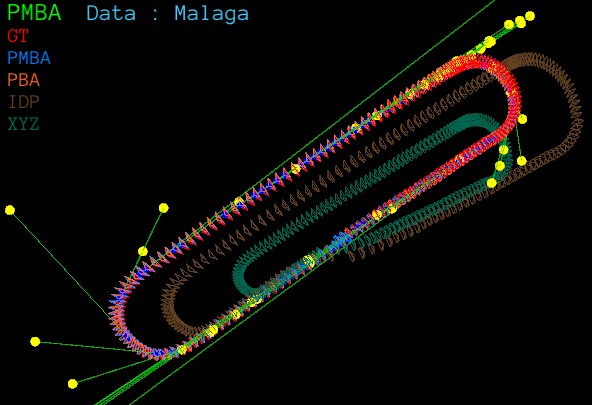}
	\caption{\footnotesize Compare BA for ``Malaga dataset'': existence of collinear features (yellow dots) cause IDP (brown) and XYZ (green) to differ significantly from Ground Truth (red); PMBA (blue) and PBA \cite{PBAliang} (orange) do not show this issue, with PMBA having the fastest convergence rate, see Fig. \ref{fig::converge}(\emph{a}).}
	\label{fig::malaga-collinear}
\end{figure}

As shown in Table \ref{table::opt-method}, this approach comes in three forms: original Gauss-Newton (GN) when the equation is easy to solve (the Hessian matrix $\mathbf{H}$ has a small condition number), Levenberg Macquardt (LM) as a damped GN if Hessian is near singular, and DogLeg (DL) as a combination of GN and the steepest descent method for fast convergence. LM is a favourite of the BA community for its safe handling despite its slowness. GN and DL are both considered risky due to the large step size and are often avoided.\\
\\
\textbf{Problematic features}\\
In many modern BA systems \cite{ceres-solver}\cite{g2o}\cite{GTSAM}, a 3D feature point is parameterized as Euclidean coordinates (XYZ) or inverse depth (IDP). A well-known problem for these representations is that when far away features exist or when camera poses observing a feature are collinear with the feature, the Hessian becomes ill-conditioned. A small change in error function leads to a large jump in the state variable, significantly affecting BA's robustness, efficiency and accuracy. See Fig. \ref{fig::malaga-collinear} and Fig. \ref{fig::converge}(\emph{a}) for illustration of failure in conventional BA.

To deal with this problem, several remedies are commonly adopted. The fundamental principle is separate treatment for problematic features and good
ones. ORB-SLAM \cite{murTRO2015} uses a prudent feature
selection strategy where features with in-sufficient parallax angles are  discarded. A hybrid method was proposed in \cite{farPoints}, that first estimates camera orientations with remote features then optimises with poses and near features. The vision smart factor proposed in \cite{smartFactor} (implemented in GTSAM \cite{GTSAM}) shares the same approach of \cite{farPoints}. It avoids degenerate cases by using a flexible-size error function.
Recently \cite{FeatureConfidence} proposed a solution in which less weighting is given to the error
 terms for “problematic” features.

Compared to the afore-mentioned methods, our proposed algorithm PMBA treats
the problem with a totally different viewpoint. We argue that \textbf{ the root cause for ill-conditioned cases is that feature uncertainty for conventional BAs is NOT uniformly bounded}. In our previous work \cite{PBAliangICRA}\cite{PBAliang}, we used three angles (elevation, azimuth and parallax) to define structure of a feature without involving depth. \cite{PBAliang} demonstrated that this parameterization is closer to the measurement space of projective geometry, parallax-based BA (we call it PBA in this paper) is
more robust and efficient compared to BA's in XYZ or IDP form. We will present our improved manifold version --  \textbf{PMBA that faithfully complies with projective geometry in computer vision. This results in a non-singular Hessian and a bounded error function that is suitable for faster implementation}.\\
\\
\textbf{Initialization methods}\\
BA due to its highly non-convex nature, requires good initial estimate to converge to global minimum. The common initialization methods include
incremental or global. In incremental strategy, with a simple start, many mid-level BAs are performed on each new pose insertion. Incremental
strategy draws the criticism that it is slow and leads to drifting for long sequences of data. Example systems are VisualSFM \cite{wu2011visualsfm},  Bundler \cite{bundler} and ORB-SLAM \cite{murTRO2015}. The alternative is global initialization where all camera poses are initialised simultaneously. Global SfM thus bootstrapped shows higher efficiency and accuracy. This strategy exposes many research challenges, and has been studied intensively in \cite{LinearGlobalTranslation}\cite{Njiang}\cite{tang2017globalslam}\cite{TanGlobal}.

This paper builds on the previous PBA algorithm \cite{PBAliangICRA}\cite{PBAliang} and makes the following improvements:
(1) recalls the conventional BA methods and analyzes its limitations (Section \ref{Section::Background}); (2) an improved PBA on manifold formulation that is able to fully avoid ``problematic feature'' induced ill-conditioned cases (Section \ref{Section::PMBA}) ; (3) a simple but effective global initialization method using convexified pose-graph model that is compatible with PMBA, which can guarantee a near-optimal solution (Section \ref{Section::Init}); 
(4) to demonstrate the two improvements, we provide both theoretical proof and experimental results from a series of large-scale datasets, sequential or out-of-order (Section \ref{Section::Experiment}).



\textbf{Notations:} 
\begin{itemize}
\item ${S}(\mathbf{x} )$ is a skew symmetric matrix from vector $\mathbf{x}\in\mathbb{R}^3$, equivalent to cross-product operator, ${S}(\mathbf{x})\mathbf{y}=\mathbf{x}\times \mathbf{y}$
 \item The term $\mathbf{T}_i = (\mathbf{R}_i, \mathbf{p}_i) \in \mathbb{SE}(3)$ represents the camera pose at time-step $i$. 
\item Subscript ${}^{(l)}$ indicates frame is local.
\item Decoration $\,\breve{}\,$ indicates vector is normalized:  $\breve{\mathbf{N}}_{j,i}=\frac{\mathbf{N}_{j,i}}{\|\mathbf{N}_{j,i}\|}$.
\end{itemize}

\section{Background Knowledge}
\label{Section::Background}
In this section, we first recall the monocular SLAM problem and conventional BA. We then  analyze the potential problems in this formulation.


The visual SLAM problem estimates camera poses $\mathbf{T} = \{ (\mathbf{R}_i, \mathbf{p}_i) \}_{i=1,\cdots,M}$ and feature positions $\mathbf{f}=\{ \mathbf{f}_j \in \mathbb{R}^3 \}_{j=1,\cdots,N}$   from a set of images $\{ I_i \}$. When the feature $j$ is observed from the pose $\mathbf{T}_i$,  
the monocular sensor intercepts the light ray $\mathbf{N}_{j,i}$
that passes through its
centre to the feature point in the form of image pixel $\mathbf{u}_{m_{j,i}}$, as shown in Fig. \ref{fig::DirErr}(\emph{a}). Table \ref{table::ray-forms} lists different expressions the observation ray can have.

\begin{figure}[htbp]
	\centering
	\begin{tabular}{cc}
	\includegraphics[width=0.53\linewidth,height=5cm]{./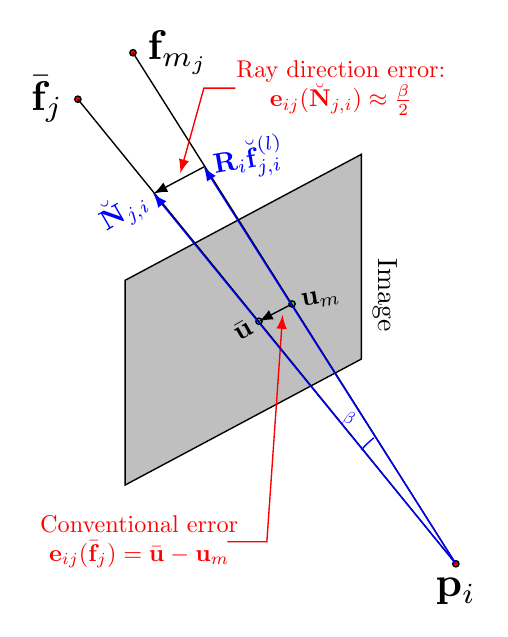} &
	\includegraphics[width=0.42\linewidth,height=5cm]{./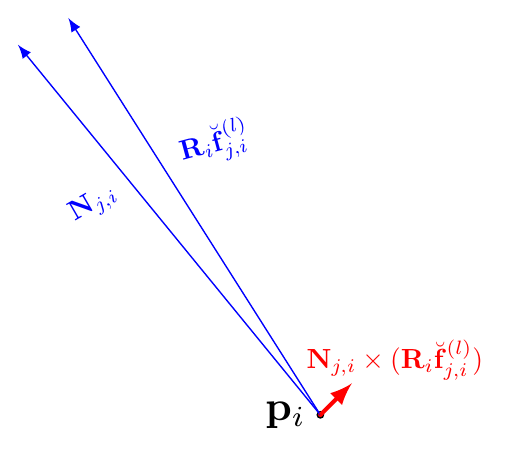} \\
{\small 
\begin{tabular}{@{}l@{}}(\emph{a}) 
Camera measurement formation\\
and BA error functions
\end{tabular}}
 &
\footnotesize{\begin{tabular}{@{}l@{}}(\emph{b}) PMBA reformatted as a QPLC\\ 
problem: minimize cross product \end{tabular}
} \\
\end{tabular}
	\caption{Projective vision and error functions in PMBA}
	\label{fig::DirErr}
\end{figure}

\begin{table}[h]
\centering
\caption{\small Various forms of observation ray in this paper} 
\label{table::ray-forms}
\begin{tabular}{|c|c|c|c|}
\hline
\footnotesize{Global ray} & 		
\footnotesize{ \begin{tabular}{@{}c@{}}Global ray \\ direction \end{tabular} } 
& 
\footnotesize{Local ray }
& 
\footnotesize{ \begin{tabular}{@{}c@{}}Local ray \\ direction \end{tabular} }
\\
\hline
$\mathbf{N}_{j,i} = \mathbf{f}_j-\mathbf{p}_i$
& 
$\mathbf{\breve{N}}_{j,i} = \frac{\mathbf{f}_j-\mathbf{p}_i}{\|\mathbf{f}_j-\mathbf{p}_i\|}$
& 
$\mathbf{N}_{j,i}^{(l)} = \mathbf{R}_i^\intercal(\mathbf{f}_j-\mathbf{p}_i)$
&
$\mathbf{\breve{N}}_{j,i}^{(l)} = \frac{\mathbf{R}_i^\intercal(\mathbf{f}_j-\mathbf{p}_i)}{\|\mathbf{R}_i^\intercal(\mathbf{f}_j-\mathbf{p}_i)\|}$ \\
\hline
 \end{tabular}
\end{table}
The information $\mathbf{N}_{i,j}$ encodes constitute constraints in a maximum a posterior (MAP) problem for poses and points.
\begin{equation}
\min_{ \mathbf{T}, \mathbf{f}  }\sum_{i,j} \| e_{ij}(\breve{\mathbf{N}}_{j,i}^{(l)}) \|^2 = \min_{ \mathbf{T}, \mathbf{f}  }\sum_{i,j} \|  e_{ij}(  \frac{\mathbf{R}_i^\intercal\mathbf{N}_{j,i} }{\| \mathbf{R}_i^\intercal\mathbf{N}_{j,i} \|}   ) \|^2.
\label{eq::NLS}
\end{equation}
In conventional BA, the error function $e_{ij}( \cdot )$ is given by:
\begin{equation}
e_{ij}( \mathbf{f}_j ) :=  \mathbf{K}\circ\pi(\mathbf{R}_i^\intercal
(\mathbf{f}_j-\mathbf{p}_i)) - \mathbf{u}_{m_{j,i}} \quad \in \mathbb{R}^2.
\label{eq::repro}
\end{equation}
BA with conventional parameterization and cost function suffers from the issues listed below: 

\begin{itemize}
\item Ill-conditioned case due to problematic features: Although these features still contain some information, they cause singularity in the Hessian matrix, a main contribution to GN divergence and numerical instability.
\item Slow convergence: To deal with singularity, slow LM is commonly used for safe increment, DL and GN are avoided, and efficiency is compromised for stability.
\item Stop criteria: Small changes in the error cost lead to large variation in the state variable, making it difficult to specify a \textbf{consistent} stop criterion.
\item Local minimum: the error function (\ref{eq::repro}) does not distinguish between in-frustum or behind camera features, thus causing many local minima and saddle points. A good example is the two-view geometry problem in which there are multiple global minima for the BA formulation such that further manual intervention is needed to pick the feasible solution.
\end{itemize}
In light of above discussion, safe-handling of ill-conditioned cases is vitally important for robustness, accuracy and efficiency of visual SLAM.

\section{Parallax Bundle Adjustment on Manifold}
\label{Section::PMBA}

In this section, we introduce the BA method using parallax angle in manifold domain (PMBA). We provide a thorough theoretical analysis on the boundedness of its information matrix, hence prove its smooth convergence without issues of singularity. We also show the error function is bounded and globally continuous. All these factors lead to possibility of faster optimization method DL, a significant improvement than  previous work \cite{PBAliangICRA}\cite{PBAliang}.

\subsection{Feature parameterization}\label{sec:pmba-feat}

\begin{figure}[h]
	\centering
\setlength\tabcolsep{1pt}
\begin{tabular}{cc}
\includegraphics[width=0.55\linewidth]{./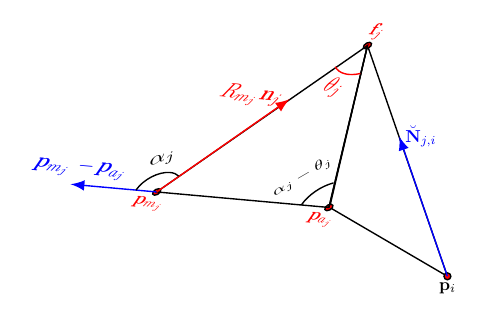} &
	\includegraphics[width=0.4\linewidth,height=3cm]{./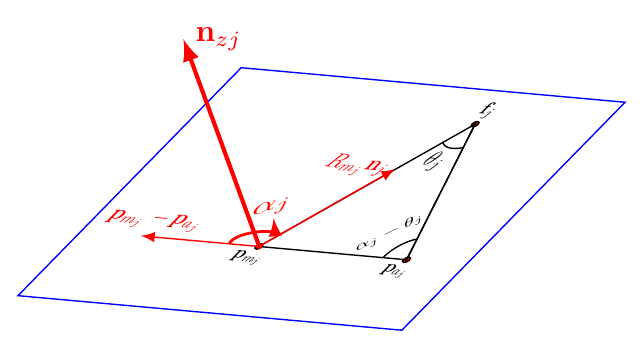} \\
	\footnotesize{ 
		\begin{tabular}{@{}l@{}}(\emph{a}) Feature $\mathbf{f}_j$ anchored by $\mathbf{p}_{m_j}$ and \\
		$\mathbf{p}_{a_j}$, $\theta_j$ is parallax angle between\\
		anchor rays, $\mathbf{n}_j$ is ray direction in\\main's frame, $\mathbf{p}_{i}$ is an arbitrary pose\\position co-visible for $\mathbf{f}_j$.
		\end{tabular} }
&
	\footnotesize{ 
			\begin{tabular}{@{}l@{}}(\emph{b}) Convexification of PMBA:  \\
			Rotate $( \mathbf{p}_a -\mathbf{p}_m  )$  about $\mathbf{n}_{zj}$ \\
			 by ($\pi - \alpha_j$)  becomes \\ $\| \mathbf{p}_m -\mathbf{p}_a \| \mathbf{R}_{m_j}$. \end{tabular}
			}
\end{tabular}
	\caption{The geometric structure about feature $j$ and its anchors in PMBA.}
	\label{fig::digram}
\end{figure}

A feature's depth information is implied in the parallax between observations from different viewpoints. For a feature $\mathbf{f}_j$, amongst the set of cameras $ \mathbb{T}_j $ to which $\mathbf{f}_j$ is visible, we choose a main anchor $\mathbf{T}_{ m_j }$ and an associate anchor $\mathbf{T}_{ a_j }$ that form best parallax angle from their observation rays. This geometric relationship among the feature $j$ is illustrated in Fig. \ref{fig::digram}\emph{(a)}.
The feature $\mathbf{f}_j$ can be over-parameterized by the unit observation ray vector $\mathbf{n}_j$ in main-anchor frame, and the parallax angle $\theta_j$, i.e.,
\begin{equation}
\mathbf{F}_j = ( \cos \theta_j,  \sin \theta_j, \mathbf{n}_j )
\label{eq::Fj}
\end{equation}
The new parameterization $\mathbf{F}_j$ only defines the relative structure of the feature with respect to its two anchors. The scale of the feature $\mathbf{f}_j$ is implicitly defined by the relative translation of the two anchors, computed as
\begin{equation}
\begin{aligned}
\mathbf{f}_j &= d_j  \mathbf{R}_{m_j}  \mathbf{n}_j + \mathbf{p}_{m_j} \\
&=\frac{\sin(\alpha_j-\theta_j)}{\sin(\theta_j)}  \|  \mathbf{p}_{m_j} - \mathbf{p}_{a_j}  \|\mathbf{R}_{m_j}  \mathbf{n}_j + \mathbf{p}_{m_j}
\label{eq::f}
\end{aligned}
\end{equation}
where 
\begin{itemize}
\item $d_j=  \frac{\sin(\alpha_j-\theta_j)}{\sin(\theta_j)}  \|  \mathbf{p}_{m_j} - \mathbf{p}_{a_j}  \|$
is the local depth of the feature $j$ in the main anchor frame, from sine rule.
\item $\mathbf{R}_{m_j}$ is the rotation for main anchor frame $\mathbf{T}_{m_j}$.
\item $\mathbf{n}_j \in \mathbb{R}^3 $ is the direction of observation ray from  point $\mathbf{f}_j$ to point $\mathbf{p}_{m_j}$, local in main anchor frame $\mathbf{T}_{m_j}$.
\item $\theta_j \in (0,\pi)$ is the parallax angle  between the vector $\mathbf{f}_j -\mathbf{p}_{m_j}$ and the vector $\mathbf{f}_j -\mathbf{p}_{a_j}$.
\item $ \alpha_j =  \arccos (  \frac{ (\mathbf{p}_{m_j} - \mathbf{p}_{a_j})^\intercal \mathbf{R}_{m_j}\mathbf{n}_j }{\| \mathbf{p}_{m_j} - \mathbf{p}_{a_j} \| }  )$  is the angle between vector $( \mathbf{p}_{m_j} - \mathbf{p}_{a_j})$ and vector $\mathbf{R}_{m_j} \mathbf{n}_j$.
\end{itemize}

\begin{remark}
In the original PBA parameterization \cite{PBAliang}, ray direction $\mathbf{n}_j$ was defined by an elevation and azimuth angle in the global frame, camera's orientation $\{\mathbf{R}_i\}$ in Euler angles. Expressing direction in sinusoids of angles is a potential source of singularity. In PMBA, both $\mathbf{n}_j$ and $\mathbf{R}_i$ are in the manifold domain. Moreoever, $\mathbf{n}_j$ is defined in $\mathbf{T}_{m_j}$'s local frame, for ease of multi-camera system application.
\end{remark}

\subsection{State retraction in manifold}

\begin{figure}[h]
	\centering
\includegraphics[width=0.8\linewidth,height=4.5cm]{./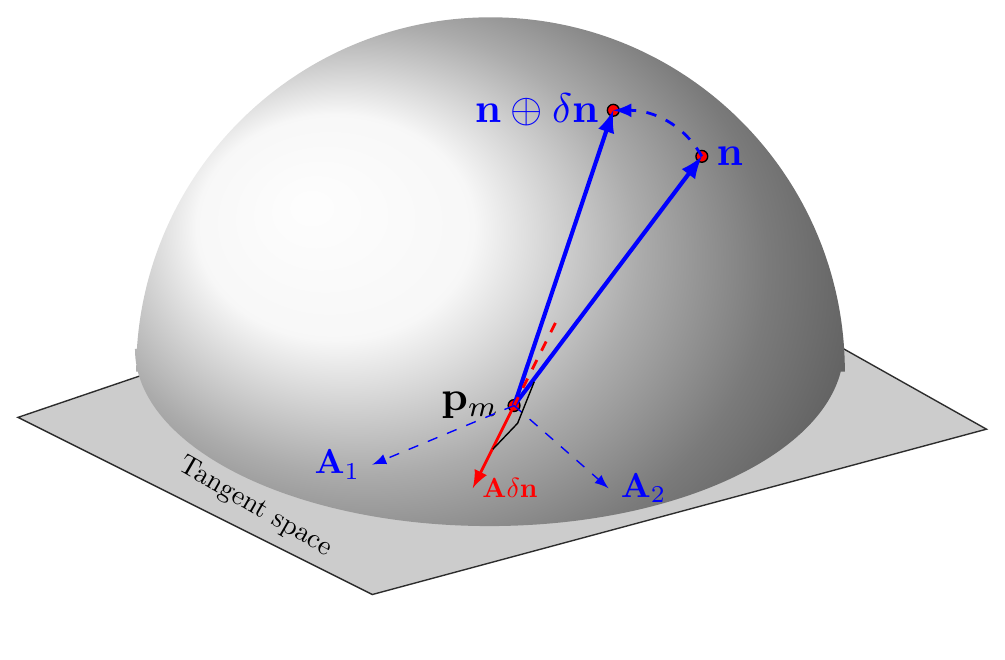}
	\caption{Retraction of ray $\mathbf{n}$ in main anchor}
	\label{fig::retraction}
\end{figure}

Optimization in manifold follows a 3 step procedure \cite{Manifold-lifting}: lift a manifold variable to its tangent space, solve a normal equation to obtain the Euclidean increment, and retract back to manifold. We adopt method in \cite{TengEKFSLAM}\cite{TengEKFVINS} for pose retraction. For feature's ray direction, we give a natural definition of uncertainty as a normally distributed rotational perturbation to the directional vector as shown in Fig.  \ref{fig::retraction}. The rotation's axis constitutes a plane normal to the ray passing through the observing camera, and is the tangent space, summarized in the following equation:
\begin{equation}
\tilde{\mathbf{n}}_j= \mathrm{Exp}(\mathbf{A}_{\mathbf{n}_j}{\delta\mathbf{n}_j})\mathbf{n}_j,\qquad \delta\mathbf{n}_j \in \mathcal{N}(0, \Sigma).
\label{eq::gaussianAxis}
\end{equation}
where $\delta\mathbf{n}_j \in \mathbb{R}^2$, $\mathbf{A}_{\mathbf{n}_j} \in \mathbb{R}^{3\times2}$ and  
$[ \mathbf{A}_{\mathbf{n}_j} \: \mathbf{n}_j ] \in \mathbb{SO}(3)$.
 The optimal perturbation is the increment for retraction $\oplus$:
 {\small
\begin{equation}
	\mathbf{F}_j \oplus \delta\mathbf{F}_j = ( \cos (\theta_j+\delta\theta_j),  \sin (\theta_j+\delta\theta_j), \mathrm{Exp}( \mathbf{A}_{\mathbf{n}_j} \delta\mathbf{n}_j  ) \mathbf{n}_j ). 	
	\label{eq::retractionF}
\end{equation}
}
 where the total increment $ \delta\mathbf{F}_j = \begin{bmatrix}
\delta\theta_j, \delta\mathbf{n}_j
\end{bmatrix} \in \mathbb{R}^3$ has same dimensionality as conventional parameterization.

\subsection{Error function and optimization formulation}

After determining the main anchor $\mathbf{T}_{m_j}$ and the associated $\mathbf{T}_{a_j}$ for each feature $j$, 
we can rewrite the nonlinear least squares problem (\ref{eq::NLS}) using the new feature parametrization
\begin{equation}
\min_{ \mathcal{X}  } \| f( \mathcal{X}  )  \|^2  = 
\min_{ \mathbf{T}, \mathbf{F}  }  \sum_{ i \in \mathbb{T}_j, j } \|  e_{ij}(  \frac{\mathbf{R}_i^\intercal \mathbf{N}_{j,i}}{\| \mathbf{R}_i^\intercal \mathbf{N}_{j,i} \|}   ) \|^2, \\
\label{eq::optimisation1}
\end{equation}
where   
 $\mathbf{F}= \{ \mathbf{F}_j \}_{j=1,\cdots,N}$ and $\mathcal{X}= (\mathbf{T}, \mathbf{F}) $. 
We now give ray $\mathbf{N}_{j,i}$ a new definition (with abuse of notation): the original ray vector scaled up by a factor of $\sin(\theta_j)$, for convenience of mathematical manipulation, i.e.,  
 \begin{equation}
\begin{aligned}
 	\mathbf{N}_{j,i} :=& \sin(\theta_j)(\mathbf{f}_j-\mathbf{p}_j)\\
 	=& \sin(\alpha_j -\theta_j ) \| \mathbf{p}_{m_j}-\mathbf{p}_{a_j}   \| \mathbf{R}_{m_j} \mathbf{n}_j \\
 	&  + \sin(\theta_j) (\mathbf{p}_{m_j} - \mathbf{p}_i).
\end{aligned}
 \end{equation}
 We also introduce a ray direction based error function, different from the conventional pixel-based error function (\ref{eq::repro}) (shown in Fig. \ref{fig::DirErr}), i.e., 
\begin{equation}
	\mathbf{e}_{ij}( \breve{\mathbf{N}}_{j,i}^{(l)}) := \breve{\mathbf{f}}^{(l)}_{m_{j,i}} - \breve{\mathbf{N}}_{j,i}^{(l)}  \in \mathbb{R}^3,
	\label{eq::errorfunction} 
\end{equation}
 where $\breve{\mathbf{f}}^{(l)}{m_{j,i}} =  \frac{\mathbf{K}^{-1} \mathbf{u}_{m_{j,i}}}{\| \mathbf{K}^{-1} \mathbf{u}_{m_{j,i}} \|}  \in \mathbb{R}^3$ is the measured directional vector for the feature $j$ in the pose $\mathbf{T}_i$. From now on, we use  $\breve{\mathbf{f}}^{(l)}_{j,i}$ to refer to $\breve{\mathbf{f}}^{(l)}{m_{j,i}}$ for simplicity.\\
 We further simplify (\ref{eq::optimisation1}) by moving measurement to global frame
\begin{equation}
\min_{ \mathcal{X}  } \| f( \mathcal{X}  )  \|^2  = 
\min_{ \mathbf{T}, \mathbf{F}  }  \sum_{ i \in \mathbb{T}_j, j } \| 
\breve{\mathbf{N}}_{j,i}- \mathbf{R}_i \breve{\mathbf{f}}_{j,i}^{(l)} \|^2. \\
\label{eq::optimisation2}
\end{equation}

\begin{remark}
	The error function (\ref{eq::errorfunction}) is globally continuous and its derivative is bounded, unlike the commonly used error function (\ref{eq::repro}). Its dimensionality is extended to 3D from 2D, meaning observation direction is also taken into consideration during optimization.
\end{remark}

Specifically,
\begin{itemize} 
\item The error equation (\ref{eq::errorfunction}) implies the residual $\|\mathbf{e}_{ij}\| = 2 \sin(\frac{\beta}{2})$,  where $\beta$ is the angle between the estimated and measured ray direction. Thus, the error equation is bounded.
\item In contrast to conventional 2D cost functions, our error function (\ref{eq::errorfunction}) operates in 3D and thus can handle the case when the feature point lies behind the observing camera.
\end{itemize}

\subsection{Theoretical analysis on behaviour of information matrix}

Consider the Hessian matrix of the problem (\ref{eq::optimisation1})  
\begin{equation}
	\mathbf{H} = \mathbf{J}^\intercal \mathbf{J}    = \begin{bmatrix}
	\mathbf{H}_{\mathbf{T}\mathbf{T}} & \mathbf{H}_{\mathbf{T}\mathbf{F}} \\
	\mathbf{H}^\intercal_{\mathbf{T}\mathbf{F}} & \mathbf{H}_{\mathbf{F}\mathbf{F}} \\
	\end{bmatrix},
\end{equation} 
where $\mathbf{J}:= \frac{\partial f(\mathcal{X}\oplus \Delta\mathcal{X})}{\partial \Delta\mathcal{X}}|_{\Delta\mathcal{X}= \mathbf{0}}$ and  $\mathcal{X}\oplus \Delta\mathcal{X} := ( \mathbf{T}\oplus \Delta\mathbf{T}, \mathbf{F}\oplus \Delta\mathbf{F}  )$.
Like the Hessian matrix in conventional BA, $\mathbf{H}_{\mathbf{FF}}$ is block diagonal. 
With the \textit{Schur's complement} method, the dominant computation in each Newton method's iteration is about solving
the following normal equation:
\begin{equation}
	( \mathbf{H}_{\mathbf{TT}} -  \mathbf{H}_{\mathbf{TF}} \mathbf{H}^{-1}_{\mathbf{FF}} \mathbf{H}^\intercal_{\mathbf{\mathbf{TF}}}    ) \Delta \mathbf{T} =   - \mathbf{C} f(\mathcal{X}),
\end{equation}
where $\mathbf{C} =  \begin{bmatrix}
\mathbf{I} & \mathbf{H}_{\mathbf{TF}} \mathbf{H}^{-1}_{\mathbf{FF}}
\end{bmatrix}$.
 In conventional BA, existence of problematic features makes the matrix $\mathbf{H}_{\mathbf{TT}} -  \mathbf{H}_{\mathbf{TF}} \mathbf{H}^{-1}_{\mathbf{FF}} \mathbf{H}^\intercal_{\mathbf{\mathbf{TF}}}$ and the block matrix $\mathbf{H}_{\mathbf{FF}}$ (with slight abuse of notation) ill-conditioned at the neighborhood of global minimum. The global minimum locates at  a ``long flat valley''  \cite{PBAliang} such that 
solvers fail or require long iterations to converge, see Fig. \ref{fig::converge}\emph{(a)} for illustration.

 \textbf{In comparison, PMBA's formulation (\ref{eq::optimisation2}),  thanks to the re-defined retraction (\ref{eq::retractionF}) and the error function (\ref{eq::errorfunction}),
 has an uncluttered Hessian, can therefore fully avoid the ill-conditioned cases caused by  ``problematic'' features.}    
\begin{theorem}
Under the formulation (\ref{eq::optimisation2}), 
	$\mathbf{H}_{\mathbf{FF}}$ is consistently non-singular for any $\mathcal{X}$ and $\mathbf{H}_{\mathbf{FF}} \geq \mathbf{I} $.
	\label{theorem::HFF}
\end{theorem}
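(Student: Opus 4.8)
The plan is to exploit the block-diagonal structure of $\mathbf{H}_{\mathbf{FF}}$ already noted in the text: because the feature parameters of distinct features never appear in the same residual, it suffices to prove the claim for each $3\times3$ diagonal block $(\mathbf{H}_{\mathbf{FF}})_j$ associated with the increment $\delta\mathbf{F}_j=[\delta\theta_j,\delta\mathbf{n}_j]$. Writing the Gauss--Newton block as a sum over co-visible cameras, $(\mathbf{H}_{\mathbf{FF}})_j=\sum_{i\in\mathbb{T}_j}\mathbf{J}_{ij}^\intercal\mathbf{J}_{ij}$ with $\mathbf{J}_{ij}=\partial\breve{\mathbf{N}}_{j,i}/\partial\delta\mathbf{F}_j$ (the measurement term $\mathbf{R}_i\breve{\mathbf{f}}_{j,i}^{(l)}$ being independent of $\mathbf{F}_j$), every summand is positive semidefinite, so I can lower-bound the block by keeping only the two distinguished anchors, $(\mathbf{H}_{\mathbf{FF}})_j\succeq\mathbf{J}_{m_j}^\intercal\mathbf{J}_{m_j}+\mathbf{J}_{a_j}^\intercal\mathbf{J}_{a_j}$. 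The whole argument then reduces to two local Jacobian computations, one per anchor.

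For the main anchor I would first use $(\ref{eq::f})$ to observe that when $i=m_j$ the translational term vanishes and $\breve{\mathbf{N}}_{j,m_j}=\mathbf{R}_{m_j}\mathbf{n}_j$ exactly, so this residual is independent of $\theta_j$ and its $\delta\theta_j$-column is zero. Differentiating the retraction $(\ref{eq::gaussianAxis})$ gives $\partial\breve{\mathbf{N}}_{j,m_j}/\partial\delta\mathbf{n}_j=-\mathbf{R}_{m_j}S(\mathbf{n}_j)\mathbf{A}_{\mathbf{n}_j}$; since $[\mathbf{A}_{\mathbf{n}_j}\:\mathbf{n}_j]\in\mathbb{SO}(3)$, the columns of $\mathbf{A}_{\mathbf{n}_j}$ are orthonormal and orthogonal to $\mathbf{n}_j$, and $S(\mathbf{n}_j)$ acts as a norm-preserving quarter-turn on that plane, so these two columns are orthonormal. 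Hence $\mathbf{J}_{m_j}^\intercal\mathbf{J}_{m_j}=\mathrm{diag}(0,\mathbf{I}_2)$: the main anchor alone pins the $\delta\mathbf{n}_j$ block to exactly $\mathbf{I}_2$ and contributes nothing to $\delta\theta_j$.

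The $\delta\theta_j$ direction must therefore be supplied by the associate anchor, and the key geometric lemma I would prove is $\|\partial\breve{\mathbf{N}}_{j,a_j}/\partial\theta_j\|=1$. This follows from the planar triangle $(\mathbf{f}_j,\mathbf{p}_{m_j},\mathbf{p}_{a_j})$: holding $\mathbf{n}_j$ and the poses fixed makes the angle at $\mathbf{p}_{m_j}$ constant while $\theta_j$ is exactly the angle at $\mathbf{f}_j$, so by the angle sum the interior angle at $\mathbf{p}_{a_j}$ changes at unit rate, i.e.\ the unit vector $\breve{\mathbf{N}}_{j,a_j}$ rotates in the triangle plane at unit angular rate with respect to $\theta_j$. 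Consequently the $\delta\theta_j$-diagonal entry of $\mathbf{J}_{a_j}^\intercal\mathbf{J}_{a_j}$ equals $1$, so the full diagonal of $(\mathbf{H}_{\mathbf{FF}})_j$ is at least $(1,1,1)$; moreover non-singularity for any $\mathcal{X}$ is immediate, since $\mathbf{J}_{m_j}$ spans the $\delta\mathbf{n}_j$ directions and $\mathbf{j}_{a,\theta}\neq\mathbf{0}$ covers $\delta\theta_j$, giving the stacked anchor Jacobian full column rank.

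The step I expect to be the real obstacle is upgrading this diagonal and rank information to the matrix inequality $\succeq\mathbf{I}_3$, because the associate anchor couples $\delta\theta_j$ to the in-plane component of $\delta\mathbf{n}_j$: both $\partial\breve{\mathbf{N}}_{j,a_j}/\partial\theta_j$ and the in-plane part of $\partial\breve{\mathbf{N}}_{j,a_j}/\partial\delta\mathbf{n}_j$ point along the same in-plane tangent at $\breve{\mathbf{N}}_{j,a_j}$, producing a nonzero off-diagonal term. I would control this with a Schur-complement argument onto $\delta\theta_j$: the main anchor pins the $\delta\mathbf{n}_j$-block to $\mathbf{I}_2$ plus a positive semidefinite associate contribution, so the Schur complement stays strictly positive uniformly in $\mathcal{X}$, which already secures positive-definiteness and hence non-singularity. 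Proving the sharper bound $\lambda_{\min}\geq1$ then requires tracking these cross terms quantitatively, combining the orthonormal retraction $(\ref{eq::gaussianAxis})$ with the unit-rate rotation lemma, and this bookkeeping is the crux of the argument.
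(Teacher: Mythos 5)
Your reduction is the paper's own: the appendix likewise drops all non-anchor residuals via $\mathbf{H}_{\mathbf{FF}_j}\geq \mathbf{J}_{m_j,j}^\intercal\mathbf{J}_{m_j,j}+\mathbf{J}_{a_j,j}^\intercal\mathbf{J}_{a_j,j}$, obtains $\mathbf{J}_{m_j,j}^\intercal\mathbf{J}_{m_j,j}=\mathrm{diag}(0,\mathbf{I}_2)$ from the retraction (\ref{eq::gaussianAxis}), and derives your unit-rate lemma from the identity $\breve{\mathbf{N}}_{j,a_j}=\cos\theta_j\,\mathbf{u}+\sin\theta_j\,\breve{\mathbf{w}}$, where $\mathbf{u}=\mathbf{R}_{m_j}\mathbf{n}_j$ and $\mathbf{w}=\mathbf{a}_j-(\mathbf{a}_j\cdot\mathbf{u})\mathbf{u}$ is the projected baseline --- your triangle angle-sum argument is a geometric paraphrase of that computation. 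The genuine gap is the one you flag yourself: you establish full column rank and a positive Schur complement, i.e.\ non-singularity, but you never prove $\mathbf{H}_{\mathbf{FF}}\geq\mathbf{I}$; the ``bookkeeping'' you defer is the entire second claim of the theorem.

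Done honestly, that bookkeeping goes the other way. Parametrize the perturbation of $\mathbf{u}$ by its components $(p,q)$ along $(\breve{\mathbf{w}},\mathbf{b})$ with $\mathbf{b}=\mathbf{u}\times\breve{\mathbf{w}}$; since $\delta\mathbf{n}_j\mapsto-\mathbf{R}_{m_j}S(\mathbf{n}_j)\mathbf{A}_{\mathbf{n}_j}\delta\mathbf{n}_j$ is a linear isometry onto the tangent plane at $\mathbf{u}$, $(p,q)$ are orthonormal coordinates of $\delta\mathbf{n}_j$. Differentiating $\breve{\mathbf{N}}_{j,a_j}=\cos\theta_j\,\mathbf{u}+\sin\theta_j\,\breve{\mathbf{w}}$ (with $\breve{\mathbf{w}}$ varying, since $\mathbf{w}$ depends on $\mathbf{u}$) gives $\partial\breve{\mathbf{N}}_{j,a_j}=p\,\mathbf{c}+q\,\kappa\,\mathbf{b}$, where $\mathbf{c}=\partial\breve{\mathbf{N}}_{j,a_j}/\partial\theta_j$ and $\kappa=\sin(\alpha_j-\theta_j)/\sin\alpha_j$: the coupling you anticipated is exact, with coefficient of unit size. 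Hence in the coordinates $(\delta\theta_j,p,q)$ the two-anchor sum is
\begin{equation*}
\mathbf{J}_{m_j,j}^\intercal\mathbf{J}_{m_j,j}+\mathbf{J}_{a_j,j}^\intercal\mathbf{J}_{a_j,j}
=\begin{bmatrix}1 & 1 & 0\\ 1 & 2 & 0\\ 0 & 0 & 1+\kappa^2\end{bmatrix},
\end{equation*}
whose smallest eigenvalue is $(3-\sqrt{5})/2\approx0.38<1$ in every non-degenerate configuration. So your planned sharpening to $\lambda_{\min}\geq1$ is impossible from the two anchors alone (consider a feature observed only by them), and the extra positive semidefinite observation terms cannot be guaranteed to restore it. The paper reaches $\mathbf{I}_3$ only by asserting $\mathbf{J}_{a_j,j}^\intercal\mathbf{J}_{a_j,j}=\mathrm{diag}(1,0,0)$, i.e.\ by computing the $\theta_j$-column alone and silently taking the associate residual's $\delta\mathbf{n}_j$-derivative to be zero --- precisely the term you identified as the obstacle. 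What survives of Theorem \ref{theorem::HFF}, via your Schur-complement route (the complement equals $1/2$ at the two-anchor level), is the uniform bound $\mathbf{H}_{\mathbf{FF}_j}\geq\frac{3-\sqrt{5}}{2}\mathbf{I}_3$ away from degeneracies such as $\mathbf{p}_{m_j}=\mathbf{p}_{a_j}$ or $\sin\alpha_j=0$; this suffices for non-singularity and well-conditioning, but neither your argument nor the paper's actually delivers the stated constant $\mathbf{I}$.
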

\begin{proof}
	See Appendix \ref{appendix::theorem1}. 
\end{proof}
Theorem \ref{theorem::HFF}  completely suppresses  all ill-conditioned $\mathbf{H}_{\mathbf{FF}}$ such that achieving convergence becomes much eaiser. As a result, DL can be safely used for efficiency.
One can also appreciate Theorem \ref{theorem::HFF} from an Information Theory perspective: the Hessian matrix at global minimum is the inverse of the covariance matrix (up to a scale)  and thus
 the uncertainty of the parallax angle $\theta_j$ and the direction $\mathbf{n}_j$ 
is uniformly bounded.  

\begin{remark}
	 The original PBA \cite{PBAliang} cannot guarantee non-singularity in $\mathbf{H}_{\mathbf{FF}}$ due to
	  use of standard addition retraction for feature, Euler angles for orientation and the error function (\ref{eq::repro}).   
\end{remark}

\begin{remark}
	Although the  matrices $\mathbf{H}_{\mathbf{TT}}$ and $\mathbf{H}_{\mathbf{TF}}$ are denser, compared to those in XYZ or IDP, $\mathbf{H}_{\mathbf{TT}} -  \mathbf{H}_{\mathbf{TF}} \mathbf{H}^{-1}_{\mathbf{FF}} \mathbf{H}^\intercal_{\mathbf{\mathbf{TF}}}$ shows same sparsity. Thus the computational time for each iteration in PMBA is comparable to conventional BA, see \cite{PBAliang} for proof. 
\end{remark}

\section{Global Initialization}
\label{Section::Init}

In this section, we derive a novel initialization strategy. We do this in two steps: An orientations and parallax feature initialization step that involves cheap rotation averaging and anchor selection, without the need of expensive triangulation; then a translation-averaging method using a simplified convex pose-graph optimization. We prove that a near-optimal solution can be obtained by this strategy. This process is illustrated in Fig. \ref{fig::block-flow}. 

\begin{figure*}
	\centering
 	 \includegraphics[width=0.9\linewidth]{./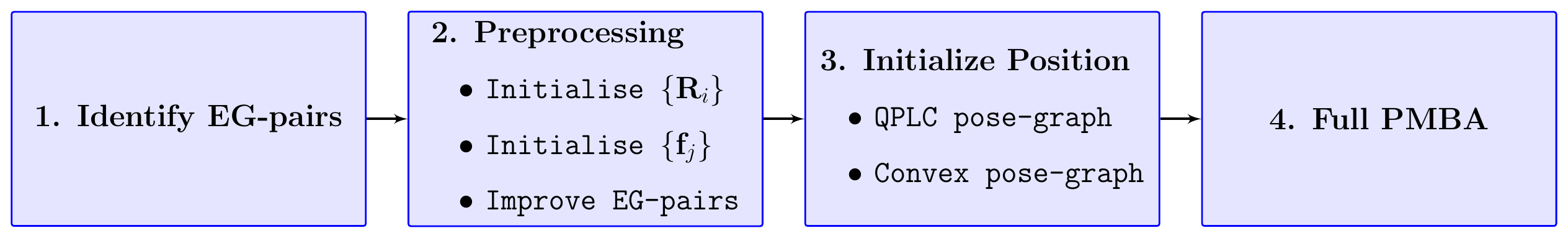}
  	\caption{
   		Full Global Initialization + PMBA pipeline.}
   	\label{fig::block-flow}
\end{figure*}

\subsection{Orientation and feature initialization}

Following the approach in \cite{cvprCrandall2011}\cite{Njiang}\cite{Wilson2014}, we first compute an initial guess for  orientation $\{ \mathbf{R}_i \}_{i=1,\cdots,M}$.
For each pair of pose $\mathbf{T}_i$ and $\mathbf{T}_k$ with common features above a threshold, we extract its Epipolar Geometry (EG) $( \tilde{\mathbf{R}}_{i,k}, \tilde{\mathbf{T}}_{i,k} )$ by Kneip's 5-point algorithm \cite{Kneip2012}.
We then use  the state-of-the-art chordal initialization \cite{LucaRotation} to accurately compute $\{ \bar{\mathbf{R}}_i \}$. We now feed $\{ \bar{\mathbf{R}}_i \}$ into OpenGV's two-pt ransac module \cite{OpenGV}, to obtain translation directions $\{ \bar{\mathbf{T}}_{i,k} \}_{i=1,\cdots,M}$.

 Having obtained accurate estimates for orientations and EG-pairs, we are ready to perform feature initialization. The  default anchor selection strategy was given in \cite{PBAliang}. We use the same algorithm for anchor selection, with the small change that co-visible pose scanned in pick anchors have to be part of an EG-pair. This step ensures best as-can-be parallax angle be given to each feature point. We stress that any problematic features corresponding to low parallax angles do stay in the state and do not affect convergence under PMBA. Good features together with problem ones work together to shape the final solution.

\begin{remark}
PMBA parameterization does not involve scale calculation, the selection algorithm in \cite{PBAliang} utilizes this property and only makes use of camera rotations to compute feature values in a fast and accurate way, we thus completely avoid unreliable/expensive linear triangulation.     
\end{remark}

\subsection{Position initialization}
After orientation and feature initialization, we can perform position initialization. We do this by approximating the original non-linear ray $\mathbf{N}_{j,i}$ function (\ref{eq::optimisation2}) with a linear relation of positions, helped with a rotation trick, as illustrated in Fig. \ref{fig::digram}\emph{(b)}. Now we give $\mathbf{N}_{j,i}$ a new formulation:
\begin{equation}
\begin{aligned}
\bar{\mathbf{N}}_{j,i} = &    \sin(\bar{\alpha}_j -\bar{\theta}_j )  \exp( \bar{\mathbf{n}}_{zj} (\pi -\bar{\alpha}_j)    ) ( \mathbf{p}_a -\mathbf{p}_m  ) \\
& 	+  \sin(\bar{\theta_j}) (\mathbf{p}_{m_j} - \mathbf{p}_i),
\end{aligned}
\label{eqn::convex}
\end{equation}
where
\begin{itemize}
\item $ \mathbf{n}_{zj} = \frac{\mathbf{p}_a -\mathbf{p}_m}{\| \mathbf{p}_a -\mathbf{p}_m \|} \times ( \mathbf{R}_{m_j} \mathbf{n}_j )  $  is the rotation axis from  the vector $\frac{\mathbf{p}_a -\mathbf{p}_m}{\| \mathbf{p}_a -\mathbf{p}_m \|}$ to the vector $ \mathbf{R}_{m_j} \mathbf{n}_j $.
\item $\alpha_{j} = \arccos(- \frac{\mathbf{p}_a -\mathbf{p}_m}{\| \mathbf{p}_a -\mathbf{p}_m \|} \cdot ( \mathbf{R}_{m_j} \mathbf{n}_j ))$ is the angle of rotation from vector $-\frac{\mathbf{p}_a -\mathbf{p}_m}{\| \mathbf{p}_a -\mathbf{p}_m \|}$ to the vector $ \mathbf{R}_{m_j} \mathbf{n}_j $.
\item Both $\mathbf{n}_{zj}$ and $\alpha_{j}$ are locally observable.
\item $\| \mathbf{p}_m -\mathbf{p}_a \| \mathbf{R}_{m_j} \mathbf{n}_j \equiv  \mathrm{Exp}( \mathbf{n}_{zj} (\pi - \alpha_j )    ) ( \mathbf{p}_a -\mathbf{p}_m  ) $ 	 is rotation of vector $(\mathbf{p}_a -\mathbf{p}_m)$ about axis $\mathbf{n}_{zj}$ by $\pi -\alpha_j$ angle.
\end{itemize}
Inspired by the translation averaging method in \cite{Wilson2014}, we obtain a ``\textit{position only}'' convex cost function, after substituting  $\bar{\mathbf{N}}_{j,i}$ into the original formulation (\ref{eq::optimisation2}) optimizationm:
\begin{equation}
\min_{ \{ \mathbf{p} \} }  h( \mathbf{p}, \bar{\mathbf{R}}, \bar{\mathbf{F}} )  :=
\min_{ \{ \mathbf{p} \} }  \sum_{ i \in \mathbb{T}_j, j } \| 
\breve{\bar{\mathbf{N}}}_{j,i}
- \bar{\mathbf{R}}_i \breve{\mathbf{f}}_{j,i}^{(l)} \|^2. \\
\label{eq::optimisation3}
\end{equation}

\begin{remark}
Considering (\ref{eq::optimisation3}) is still a nonlinear problem, an initial guess for the problem (\ref{eq::optimisation3}) is needed. Since $\bar{\mathbf{N}}_{j,i}$ is linear to positions,
 we further simplify (\ref{eq::optimisation3}) to a linearly constrained Quadratic Programming (QPLC) problem: to minimize the cross-product between ray $N_{j,i}$ and $\mathbf{R}_i\breve{\mathbf{f}}_{j,i}$, with a linear constraint to ensure local observation ray $\mathbf{R}_i{\mathbf{N}}_{j,i}$ lies in front of the camera, as shown in Fig. \ref{fig::DirErr}\emph{(b)}, i.e.,
\begin{equation}
	\min_{ \{ \mathbf{p} \} }  \sum_{ i \in \mathbb{T}_j, j } \|  S(\bar{\mathbf{R}}_i \breve{\mathbf{f}}_{j,i}^{(l)}) \bar{\mathbf{N}}_{j,i} \|^2,\quad  z(\bar{R}_i\bar{\mathbf{N}}_{j,i}) >= 0.
\end{equation}  
\label{eq::optimisation4}
\end{remark}

\subsection{Theoretical analysis}

\begin{theorem}
With accurate initial estimate for orientation, the formulation  (\ref{eq::optimisation3}) can always converge to a near-optimal solution for both problem (\ref{eq::optimisation2}) and (\ref{eq::optimisation3}). Furthermore,  the problem (\ref{eq::optimisation3}) is convex when EG pairs are noise-free.
\label{therem::Convex}
\end{theorem}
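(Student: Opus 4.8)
The plan is to prove the two assertions separately: first the convexity of (\ref{eq::optimisation3}) under noise-free epipolar geometry, then the near-optimality guarantee via a perturbation argument. The whole argument rests on a single structural fact. Once the orientations $\{\bar{\mathbf{R}}_i\}$ and the locally observable quantities $\bar{\theta}_j$, $\bar{\alpha}_j$, $\bar{\mathbf{n}}_{zj}$ are held fixed, the redefined ray $\bar{\mathbf{N}}_{j,i}$ in (\ref{eqn::convex}) is a \emph{linear} function of the camera positions $\{\mathbf{p}_i\}$. Indeed, the rotation trick $\|\mathbf{p}_m-\mathbf{p}_a\|\mathbf{R}_{m_j}\mathbf{n}_j \equiv \mathrm{Exp}(\mathbf{n}_{zj}(\pi-\alpha_j))(\mathbf{p}_a-\mathbf{p}_m)$ replaces the norm $\|\mathbf{p}_m-\mathbf{p}_a\|$, which is nonlinear in positions, by a fixed rotation acting linearly on the difference $(\mathbf{p}_a-\mathbf{p}_m)$; the remaining term $\sin(\bar{\theta}_j)(\mathbf{p}_{m_j}-\mathbf{p}_i)$ is already linear. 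Since only position differences appear, the map is in fact homogeneous of degree one in $\{\mathbf{p}_i\}$, a point I will need later.

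\textbf{Convexity.} For the noise-free case I would work with the QPLC reformulation (\ref{eq::optimisation4}), which the preceding remark identifies as the problem actually solved in place of (\ref{eq::optimisation3}). Because $\bar{\mathbf{N}}_{j,i}$ is linear in $\{\mathbf{p}_i\}$, each term $S(\bar{\mathbf{R}}_i\breve{\mathbf{f}}_{j,i}^{(l)})\bar{\mathbf{N}}_{j,i}$ is linear, so the objective is a finite sum of squared linear maps, i.e. a positive-semidefinite quadratic form, hence convex; the observability condition $z(\bar{\mathbf{R}}_i\bar{\mathbf{N}}_{j,i})\ge 0$ is linear in the positions, so the feasible set is a polyhedron and the whole program is a convex QP. I then argue that when the EG pairs are noise-free the rotation-trick identity holds exactly with the true axis and angle, so $\bar{\mathbf{N}}_{j,i}=\mathbf{N}_{j,i}$, and at the ground-truth positions $\bar{\mathbf{N}}_{j,i}$ is parallel to $\bar{\mathbf{R}}_i\breve{\mathbf{f}}_{j,i}^{(l)}$, forcing every cross product to vanish. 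Thus the true configuration attains the zero global minimum. Because the cost is homogeneous of degree two, the trivial collapse $\mathbf{p}_i\equiv\mathrm{const}$ also yields zero, so the gauge (one global translation and an overall scale, as in \cite{Wilson2014}) must be fixed to isolate the true minimizer.

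\textbf{Near-optimality.} For the general case I treat $(\bar{\theta}_j,\bar{\alpha}_j,\bar{\mathbf{n}}_{zj},\bar{\mathbf{R}}_i)$ as perturbations of their exact values. Each is a continuous function of the orientation estimates and the EG translation directions, so as the orientation error tends to zero we obtain $\bar{\mathbf{N}}_{j,i}\to\mathbf{N}_{j,i}$ uniformly, and the cost of (\ref{eq::optimisation3}) converges to the cost of (\ref{eq::optimisation2}) pointwise in $\{\mathbf{p}_i\}$. Convexity (established above, and approximately retained under small noise) guarantees that any descent solver reaches the global minimum of (\ref{eq::optimisation3}); this is the ``always converge'' claim. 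A standard continuity argument on minimizers then places the solution of (\ref{eq::optimisation3}) in a neighbourhood of the minimizer of (\ref{eq::optimisation2}) whose radius shrinks with the orientation and EG error, yielding the near-optimal guarantee for both problems.

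\textbf{Main obstacle.} I expect the delicate step to be the quantitative near-optimality bound, not the convexity, which is essentially structural. Concretely, I must bound the linearization error $\|\bar{\mathbf{N}}_{j,i}-\mathbf{N}_{j,i}\|$ in terms of the orientation and translation-direction errors, accounting for the genuine mismatch between the fixed axis $\bar{\mathbf{n}}_{zj}$ and the position-dependent true axis, and then propagate this through the nonlinear normalization $\breve{\bar{\mathbf{N}}}_{j,i}=\bar{\mathbf{N}}_{j,i}/\|\bar{\mathbf{N}}_{j,i}\|$ and through the gauge reduction into a bound on the suboptimality gap of the nonconvex objective (\ref{eq::optimisation2}). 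Keeping the constrained convex minimum well separated from the degenerate collapse, so that the bound is meaningful, is the other point requiring care.
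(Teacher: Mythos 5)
There is a genuine gap: your argument never engages the objective that Theorem~\ref{therem::Convex} is actually about. Your convexity step is carried out on the QPLC reformulation (\ref{eq::optimisation4}), which is a sum of squares of linear maps under linear constraints and is therefore a convex QP \emph{for any noise level} --- the noise-free hypothesis plays no role there, which should already signal that you are proving the wrong statement. The theorem's claim concerns (\ref{eq::optimisation3}), whose residuals $\breve{\bar{\mathbf{N}}}_{j,i}-\bar{\mathbf{R}}_i \breve{\mathbf{f}}_{j,i}^{(l)}$ involve the normalization $\bar{\mathbf{N}}_{j,i}/\|\bar{\mathbf{N}}_{j,i}\|$, and that normalization destroys convexity in $\mathbf{p}$ in general; the convexity of the auxiliary QP does not transfer. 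This makes your near-optimality argument circular as well: it invokes ``convexity (established above, and approximately retained under small noise)'' to claim descent reaches the global minimum of (\ref{eq::optimisation3}), but that convexity was never established for (\ref{eq::optimisation3}), and your own closing paragraph concedes that the quantitative perturbation bound your continuity argument would need is missing.

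The missing idea, and the heart of the paper's proof, is a quasiconvexity property of the normalized-direction cost. Writing $h_{\mathbf{V}}(\mathbf{x}) := \|\breve{\mathbf{x}}-\mathbf{V}\|^2$ with $\|\mathbf{V}\|=1$, one has
\begin{equation}
h_{\mathbf{V}}(\mathbf{x}+\lambda\Delta\mathbf{x}) \;\leq\; \max\{\, h_{\mathbf{V}}(\mathbf{x}),\; h_{\mathbf{V}}(\mathbf{x}+\Delta\mathbf{x}) \,\}
\quad \text{for all } \Delta\mathbf{x},\ \lambda\in(0,1).
\end{equation}
Combining this with the linearity of $\bar{\mathbf{N}}_{j,i}$ in the positions (the one structural fact you did identify correctly), each term of (\ref{eq::optimisation3}) is $h_{\mathbf{V}_{ij}}(\bar{\mathbf{A}}_i\mathbf{p})$, and summing the max-inequality along the segment from the global minimizer $\bar{\mathbf{p}}$ gives $h(\bar{\mathbf{p}}+\lambda\Delta\mathbf{p}) \leq h(\bar{\mathbf{p}}) + h(\bar{\mathbf{p}}+\Delta\mathbf{p})$ for every $\Delta\mathbf{p}$. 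Hence any point at which a descent method stalls has cost at most $2\,h(\bar{\mathbf{p}})$: ``near-optimal'' in the theorem means a multiplicative factor-$2$ bound on the cost, valid at any noise level and requiring none of the quantitative $\|\bar{\mathbf{N}}_{j,i}-\mathbf{N}_{j,i}\|$ estimates you flag as your main obstacle. The noise-free convexity claim then falls out for free: $h(\bar{\mathbf{p}})=0$ forces $0 \leq h(\mathbf{p}_{op}) \leq 2h(\bar{\mathbf{p}}) = 0$, so every local minimum of (\ref{eq::optimisation3}) attains the global value --- that, not the convexity of the auxiliary QP, is the sense in which the paper asserts (\ref{eq::optimisation3}) is convex. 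Without the max-inequality lemma your proposal has no mechanism for controlling local minima of the normalized objective, so the central claims remain unproved.
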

\begin{proof}
	See Appendix \ref{appendix::theorem2}. 
\end{proof}
Theorem \ref{therem::Convex} proves the correctness and robustness of the proposed initialization in theory. Moreover, (\ref{eq::optimisation3}) is a pose-graph problem with much reduced size than (\ref{eq::optimisation2}) and the expensive feature retraction operation is also not needed.
 

\begin{remark}
	Here we do not claim the proposed global initialization is the best one but it is very compatible to PMBA. 
	Note that the proposed method is  friendly to robust methods such as pseudo Huber, $L^1$-norm or outlier detection  technique. 
	Further, this convexified model is still formulated in a probabilistic framework, different from the ``Linear Global Translation Estimation'' reported in \cite{LinearGlobalTranslation}. 
\end{remark}

\section{Evaluation on PMBA performance}\label{Section::Experiment}
\subsection{Simulation}
We demonstrate PMBA's ability to handle problem features by running a simple simulation test: 4 poses and 10 features. One of the problem features is a far feature, another is a singular feature that would cause singularity in original PBA, as shown in Fig. \ref{fig::simu}\emph{(a)}. We run 4 iterations for the BAs under comparison: DL for PMBA and XYZ-BA; and LM for PBA. At each iteration, we collect the Hessian's condition number, and at the end report the error between optimization results and ground truth. The results are listed in Table \ref{table::simu}. PMBA has normal condition numbers and gave good optimized estimates, PBA and XYZ-BA show consistently large condition numbers and high final state error. This confirms our prediction that PMBA has well-behaved information matrix during optimization.

\begin{figure}[h]
	\centering
	\setlength\tabcolsep{1pt}
	\begin{tabular}{cc}
		\includegraphics[width=0.49\linewidth,height=3cm]{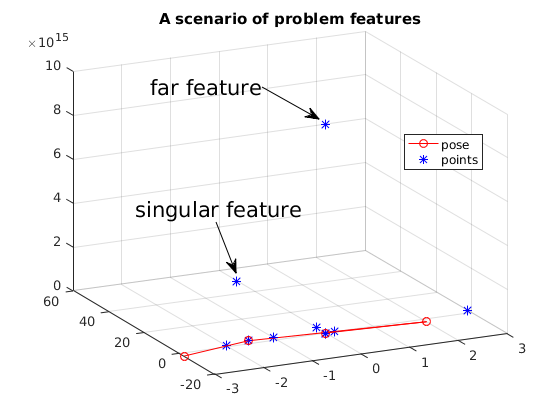}&
		\includegraphics[width=0.49\linewidth,height=3cm]{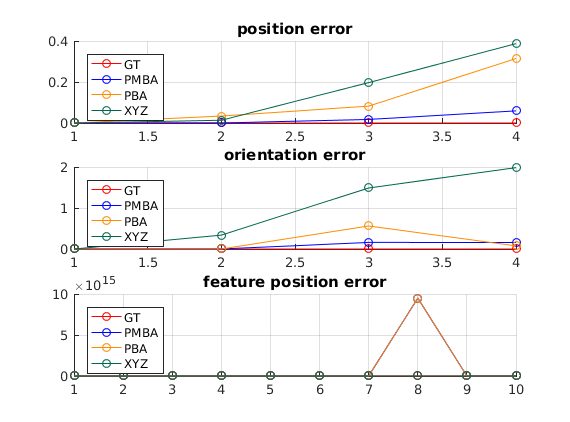} 
		\\
		\footnotesize{\begin{tabular}{@{}l@{}}(\emph{a}) Simulation with\\ problem features\end{tabular}} & \footnotesize{\begin{tabular}{@{}l@{}}(\emph{b}) Comparison of the error\\ from PMBA, PBA and XYZ-BA\end{tabular}}\\
	\end{tabular}
	\caption{\footnotesize{Compare three BA forms in a simulated scene with problem features}}
	\label{fig::simu}
\end{figure}

\begin{table}[h]
\centering
	\caption{\footnotesize Comparison of $\mathbf{H}_{\mathbf{FF}}$'s condition number during optimization and final state error for PMBA, PBA and XYZ-BA}
	\label{table::simu}
	\begin{tabular}{|c|c|c|c|}
\hline
 Convergence Properties & PMBA    &  PBA & XYZ-BA  \\
 \hline
Iter-0 $cond(\mathbf{H}_{\mathbf{FF}})$  & 9.74 & 1.46E+4 & 1.22E+94 \\
Iter-1 $cond(\mathbf{H}_{\mathbf{FF}})$ & 5.68 & 1.46E+4 & 1.22E+94 \\
Iter-2 $cond(\mathbf{H}_{\mathbf{FF}})$ & 8.80 & 1.46E+4 & 1.22E+94 \\
Iter-3 $cond(\mathbf{H}_{\mathbf{FF}})$ & 5.74 & 1.46E+4 & 3.53E+95 \\
\hline
Final $\chi_{error}^2$ & 2.58E-3 & 5.37E-2 & 3.43E-2 \\
\hline
\end{tabular}
\end{table}


\subsection{Large dataset test}\label{SubSection::test-convergence}
We conducted a series of real datasets to compare performance of the proposed PMBA (\ref{eq::optimisation2}) and original PBA, IDP and XYZ, aiming to address following questions:
\begin{itemize}
	\item \textbf{Robustness.} 
	With ill-conditioned scenario disappearing, can DL be safely used in PMBA?
	
	\item  \textbf{Efficiency.}
	If DL were safely applied for PMBA formulation, how fast can the optimization process be?  
		
	\item  \textbf{Accuracy.} 
	Since the PMBA formulation employs a different error function (\ref{eq::errorfunction}). 
	Is the global minimum accurate?	
\end{itemize}

All methods are tested against 6 very challenging datasets, which are also accessible from OpenSLAM\footnote{\url{https://svn.openslam.org/data/svn/ParallaxBA/}}. In particular,
\begin{itemize}
\item \textit{Fake-pile} is collected by the Google tango tablet in normal lab environment \cite{gtango} with a fake bridge pile in the middle, showing close and far features.
\item \textit{Malaga} \cite{Malaga} is collected using an electric car equipped camera facing the road, rich in collinear features.
\item \textit{Village} and \textit{College} are aerial photogrammetric datasets. The low feature to observation ratio implies the existence of many small parallax features
\item \textit{Usyd-Mainquad-2} and \textit{Victoria-cottage} are collected at University of Sydney campus, show in Fig. \ref{fig::usyd-mainquad-2}.

\end{itemize}

\begin{figure}
	\centering
	\setlength\tabcolsep{1pt}
	\begin{tabular}{ll}
 \includegraphics[width=0.47\linewidth,height=3cm]{./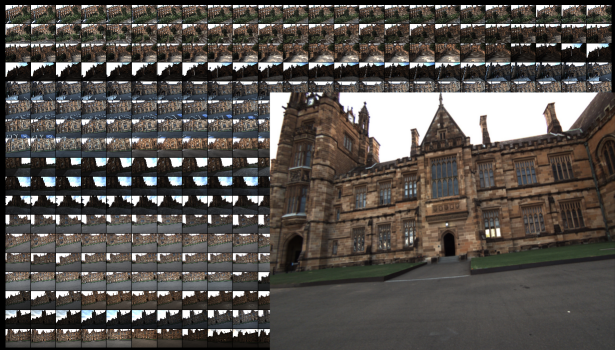} &   \includegraphics[width=0.47\linewidth,height=3cm]{./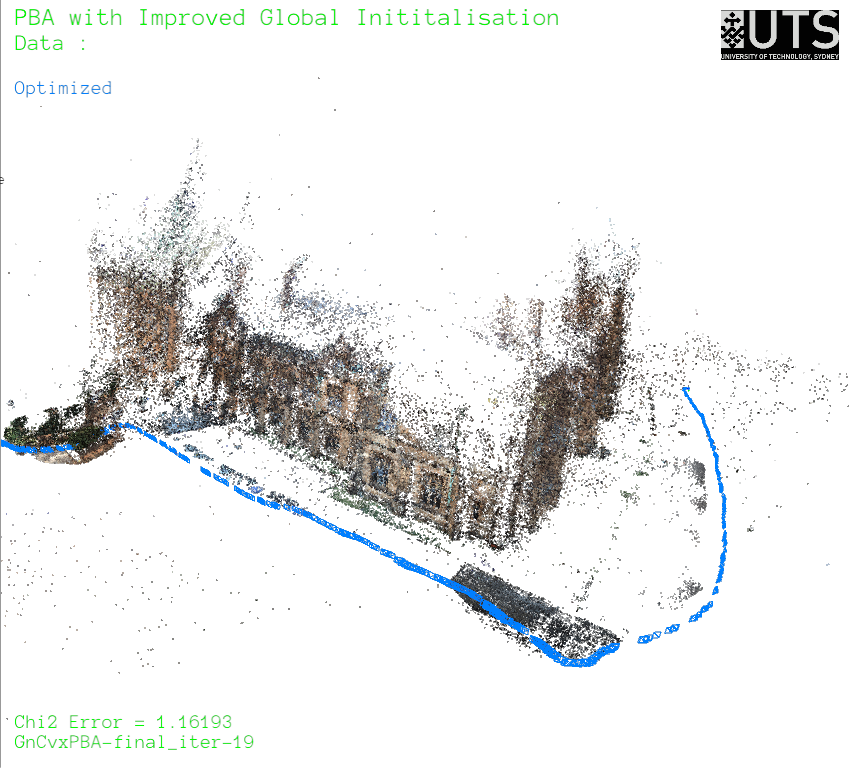} \\		
\footnotesize{(\emph{a}) Images } & \footnotesize{(\emph{b}) PMBA output pose and map}\\ 
\end{tabular}
\caption{Usyd-mainquad dataset}
\label{fig::usyd-mainquad-2} 
\end{figure}

\begin{table*}[]
	\centering
	\setlength\tabcolsep{3pt}	
\caption{\footnotesize Comparison of convergence performance for PMBA, PBA, XYZ-BA, IDP-BA}
	\label{table::cmprperfm}
	\begin{tabular}{|c|c|c|c|c|c|c|}
		\hline
		\footnotesize{Dataset} & \footnotesize{Test-type} &
		\footnotesize{\begin{tabular}{@{}c@{}}\# Pose \\ / \# Feat \\ / \# Obsv \end{tabular}
		} &
		 \footnotesize{\begin{tabular}{@{}c@{}}\# Equation solving \\ / \# Iteration \end{tabular}} & \footnotesize{Initial Chi2} & \footnotesize{Final Chi2} & \footnotesize{Time[sec]} \\
		\hline
	\footnotesize{Fake-pile} & \footnotesize{PMBA} & \footnotesize{135} & \footnotesize{9 / 9} & \footnotesize{6.6E+6}  & \footnotesize{1.7E+2}  & \footnotesize{0.7} \\
    & \footnotesize{PBA} & \footnotesize{/12,741} & \footnotesize{23 / 23} & \footnotesize{6.6E+6} & \footnotesize{1.7E+2}  & \footnotesize{1.9}
	    \\
	 & \footnotesize{IDP} & \footnotesize{/53,878} & \footnotesize{104 / 102} & \footnotesize{6.6E+6}  & \footnotesize{1.7E+2} & \footnotesize{6.0}
	    \\
	 & \footnotesize{XYZ} &  & \footnotesize{116 / 108} & \footnotesize{6.6E+6} & \footnotesize{1.2E+3} & \footnotesize{4.7} \\
		\hline		
		
	\footnotesize{Malaga} & \footnotesize{PMBA} & \footnotesize{170} & \footnotesize{44 / 31} & \footnotesize{3.8E+7} & \footnotesize{9.1E+3}
	 & \footnotesize{21.6} \\
	 & \footnotesize{PBA} & \footnotesize{/305,719} & \footnotesize{64 / 47} & \footnotesize{3.8E+7} 
	  & \footnotesize{9.1E+3}
	   & \footnotesize{35.4} \\
	 & \footnotesize{IDP} & \footnotesize{/779,268} & \footnotesize{230 / 170} & \footnotesize{3.1E+7}	  & \footnotesize{5.8E+5}	   & \footnotesize{93.8}	    \\
	 & \footnotesize{XYZ} &  & \footnotesize{110 / 85} & \footnotesize{3.8E+7}
	  & \footnotesize{3.3E+5}
	   & \footnotesize{39.0}	    \\
		\hline
		
    \footnotesize{Village} & \footnotesize{PMBA} & \footnotesize{90} & \footnotesize{12 / 12} & \footnotesize{1.0E+10}
     & \footnotesize{3.3E+4} & \footnotesize{31.8} \\
	 & \footnotesize{PBA} & \footnotesize{/305,719} & \footnotesize{13 / 13} & \footnotesize{1.0E+10}
	  & \footnotesize{3.3E+4}
	   & \footnotesize{36.0}	    \\
	 & \footnotesize{IDP} & \footnotesize{/779,268} & \footnotesize{19 / 19} & \footnotesize{1.0E+10} & \footnotesize{3.3E+4} & \footnotesize{35.2} \\
	 & \footnotesize{XYZ} &  & \footnotesize{18 / 18} & \footnotesize{1.0E+10} & \footnotesize{3.3E+4} & \footnotesize{26.3} \\
		\hline
		
    \footnotesize{College} & \footnotesize{PMBA} & \footnotesize{468} & \footnotesize{33 / 33} & \footnotesize{3.0E+11} & \footnotesize{1.1E+6} & \footnotesize{334.4} \\
	 & \footnotesize{PBA} & \footnotesize{/1,236,502} & \footnotesize{31 / 31} & \footnotesize{3.0E+11} & \footnotesize{1.1E+6} & \footnotesize{370.5} \\
	 & \footnotesize{IDP} & \footnotesize{/3,107,524} & \footnotesize{34 / 34} & \footnotesize{3.0E+11} & \footnotesize{1.1E+6} & \footnotesize{255.3} \\
	 & \footnotesize{XYZ} &  & \footnotesize{295 / 193} & \footnotesize{3.0E+11} & \footnotesize{1.0E+7} & \footnotesize{1361.0} \\
		\hline
		
    \footnotesize{Victoria} & \footnotesize{PMBA} & \footnotesize{400} & \footnotesize{19 / 16} & \footnotesize{6.2E+8} & \footnotesize{1.1E+6} & \footnotesize{70.5} \\
	 \footnotesize{cottage}& \footnotesize{PBA} & \footnotesize{/153,632} & \footnotesize{88 / 66} & \footnotesize{6.2E+8} & \footnotesize{1.2E+6} & \footnotesize{301.4} \\
	 & \footnotesize{IDP} & \footnotesize{/890,057} & \footnotesize{49/48} & \footnotesize{6.2E+8} & \footnotesize{1.1E+6} & \footnotesize{157.9} \\
	 & \footnotesize{XYZ} &  & \footnotesize{47 / 44} & \footnotesize{6.2E+8} & \footnotesize{1.2E+6} & \footnotesize{124.3} \\
		\hline

    \footnotesize{Usyd} & \footnotesize{PMBA} & \footnotesize{424} & \footnotesize{25 / 25} & \footnotesize{2.4E+9} & \footnotesize{2.4E+6} & \footnotesize{214.5} \\
	 \footnotesize{-Mainquad}& \footnotesize{PBA} & \footnotesize{/227,615} & \footnotesize{101 / 57} & \footnotesize{2.4E+9} & \footnotesize{3.6E+6} & \footnotesize{642.6} \\
	& \footnotesize{IDP} & \footnotesize{/1,607,082} & \footnotesize{301 / 191} & \footnotesize{2.2E+9} & \footnotesize{4.6E+6} & \footnotesize{1994.7} \\
	 & \footnotesize{XYZ} &  & \footnotesize{76 / 58} & \footnotesize{2.4E+9} & \footnotesize{2.8E+6} & \footnotesize{423.7} \\
		\hline											\hline		
	\end{tabular}
\end{table*}

\begin{figure}[h]
	\centering
	\setlength\tabcolsep{1pt}
	\begin{tabular}{cc}
		\includegraphics[width=0.48\linewidth]{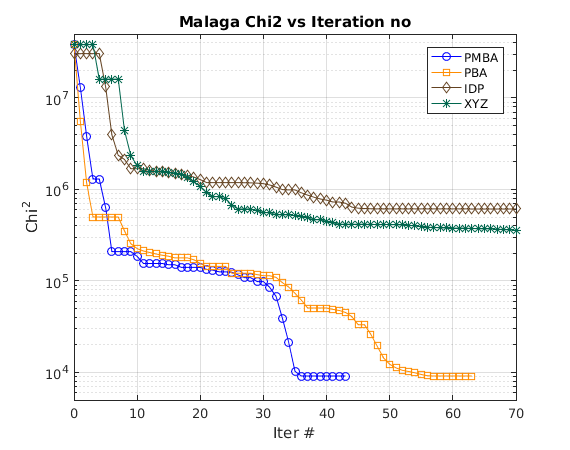} &		
	\includegraphics[width=0.48\linewidth]{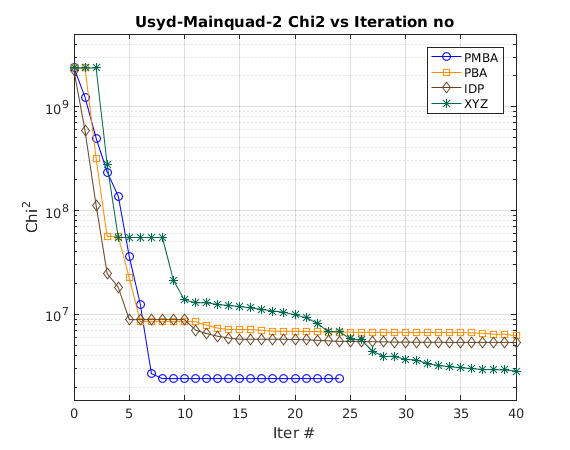} \\
	\tiny{(\emph{a}) Malaga }	&
	\tiny{(\emph{b}) Usyd-Mainquad-2} \\
	\includegraphics[width=0.48\linewidth]{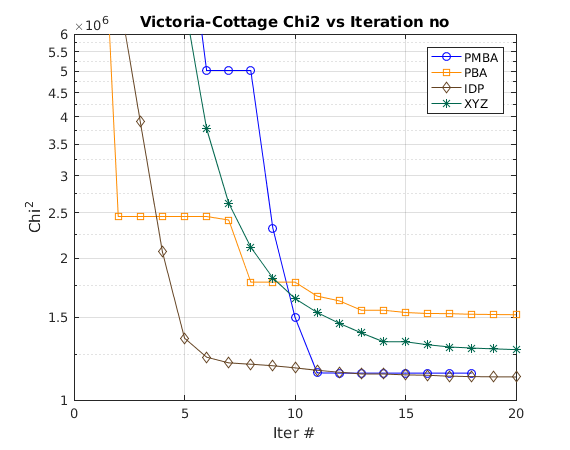} &
		\includegraphics[width=0.48\linewidth]{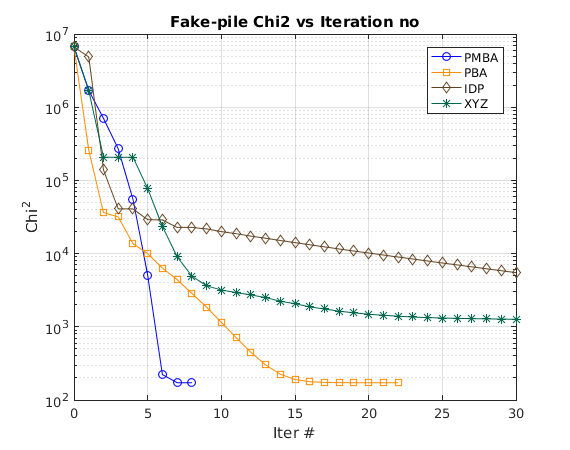} \\
	\tiny{ (\emph{c}) Victoria-cottage} &
	\tiny{(\emph{d}) Fake-pile} \\                       
	\end{tabular}
\caption{Convergence plots for PMBA, PBA, IDP and XYZ}
\label{fig::converge}	
\end{figure}

We use the initialization method from \cite{PBAliang} to set all BAs from the same starting point. We find that PBA, IDP and XYZ show unstable behaviour under DL. PMBA, in comparison, has always worked well with DL. This can be explained by our Theorem \ref{theorem::HFF} that Hessian in PMBA does not exhibit singularity yet others can, we therefore list DL results for PMBA and LM for other BA's. 


We use Ceres-solver as the optimization engine and test all BAs on an Intel-i7 with one thread. We use ray direction cost function for PMBA, and compute its corresponding uv-based Chi2 error at each iteration step with current estimate, to  compare with other BAs on a common error metric. This scheme is not fair for PMBA, yet is the only convincing way to evaluate performance amongst all methods. Despite of this treatment, we found PMBA the best performer in all tests, consistent to our expectation.
Selected convergence plots are shown in Fig. \ref{fig::converge}, other details are summarized in Table \ref{table::cmprperfm}.

Further, in the Malaga dataset that contains numerous problematic features (Fig. \ref{fig::malaga-collinear}), we observe that the PMBA estimates and Ground Truth are very close, yet conventional BA gives significant error. This is also seen in Table \ref{table::cmprperfm}, conventional BA's converge to a local minimum, whereas both PBA and PMBA can converge to their respective global minimums. Fig. \ref{fig::converge} demonstrates the error function (\ref{eq::errorfunction}) is practical,  consistent with  the claim in \cite{Im2016}. In conclusion, these experiments all give positive answers to the raised questions.

\subsection{Evaluation of convexified initialization}
In this subsection, we use selected datasets from the ``Bundle Adjustment in the Large'' (BAL) datebase\footnote{\url{http://grail.cs.washington.edu/projects/bal/}} \cite{BAL} and the datasets in Section \ref{SubSection::test-convergence} to verify our initialization strategy.
We implement a SfM pipeline according to the procedure in Fig. \ref{fig::block-flow}. The QPLC
stage was implemented with matlab toolbox quadprog, the rest with Ceres in C++. We present results for following datasets:
\begin{itemize}
\item Ladybug-1370: 1370 images, captured at a regular rate using a Ladybug camera mounted on a moving vehicle.
\item Trafalgar-126: 126 out-of-order internet images.
\item Venice-427: 427 out-of-order internet images.
\item College: 468 arial photogrammetric images.
\end{itemize}
 
All these datasets include either collinear or far features, exposing challenges for conventional BA. Since camera calibration is beyond the scope of this activity, we apply the reported optimal camera settings from BAL and PBA websites and only test undistorted versions of these data. We stress that our initial pose and feature values are purely generated from the Rotation averaging and Translation averaging method described in Section \ref{Section::Init}, without using the initial values provided by \cite{BAL}.
We are able to form good initial values at QPLC stage, shown in Fig. \ref{fig::qpcvxpmba-pipeline} as 1st image in each row. And, our convex pose-graph stage has a very large convergence region such that imperfect outputs from the QPLC stage can gradually converge to a pose-graph with a topology similar to that of Ground Truth. This is especially obvious in ``Ladybug-1370'' and ``Venice-427''. Moreover, in ``Ladybug-1370'', BAL's optimal trajectory (in red) contains an erroneous camera pose, shown as red dot at top right corner of the red trajectory, our method did not encounter this stray pose at all.

 \begin{figure}{}
	\centering
		\setlength\tabcolsep{1pt}
			\begin{tabular}{cccc}
 	\includegraphics[width=0.24\linewidth,height=2cm]{./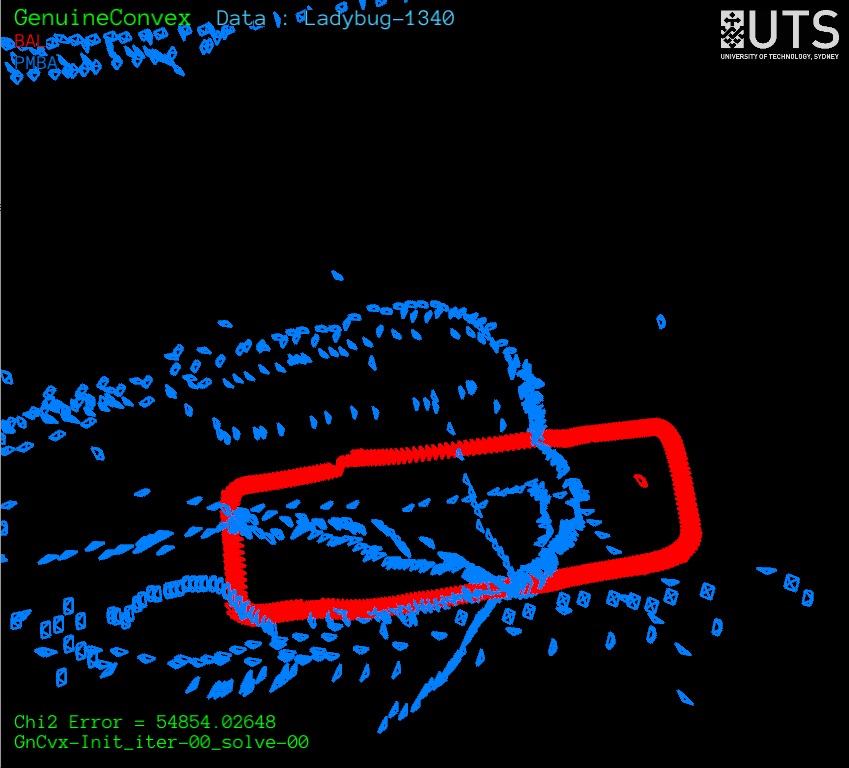} &
 	\includegraphics[width=0.24\linewidth,height=2cm]{./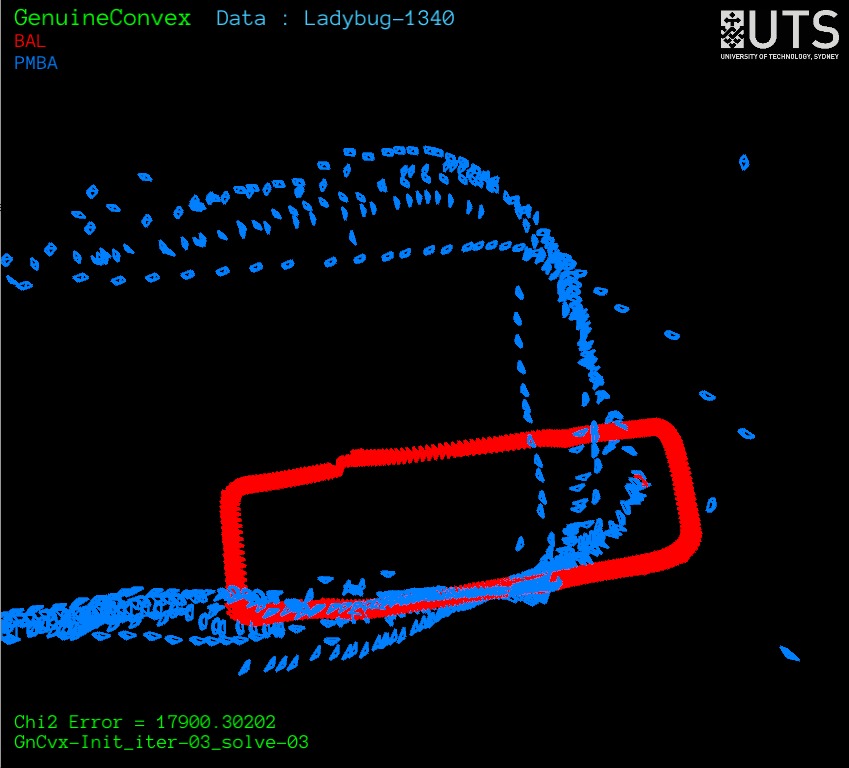} &
 	\includegraphics[width=0.24\linewidth,height=2cm]{./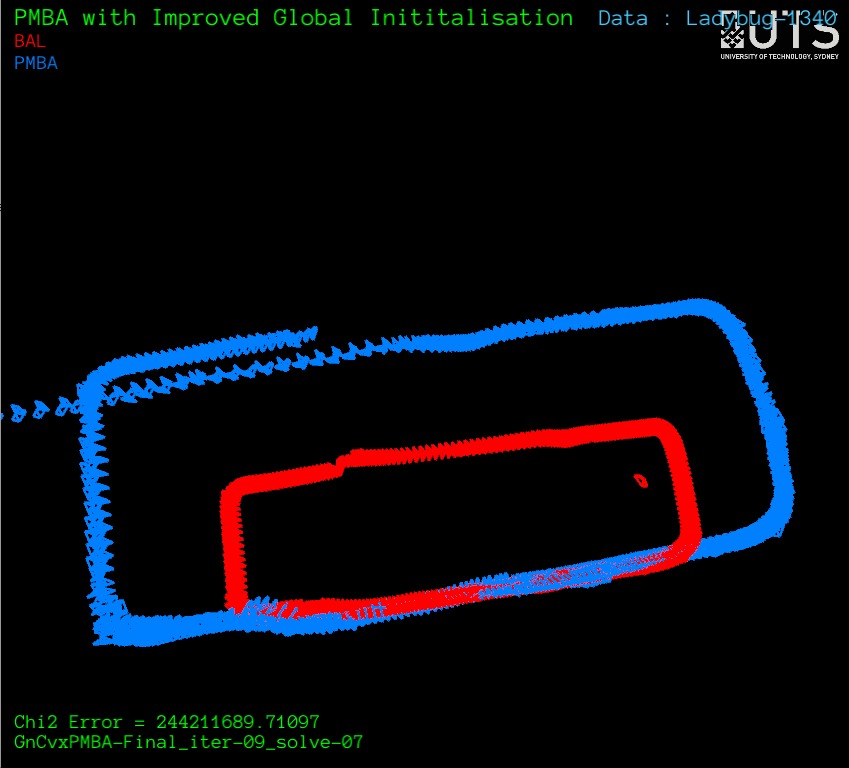} &
 	\includegraphics[width=0.24\linewidth,height=2cm]{./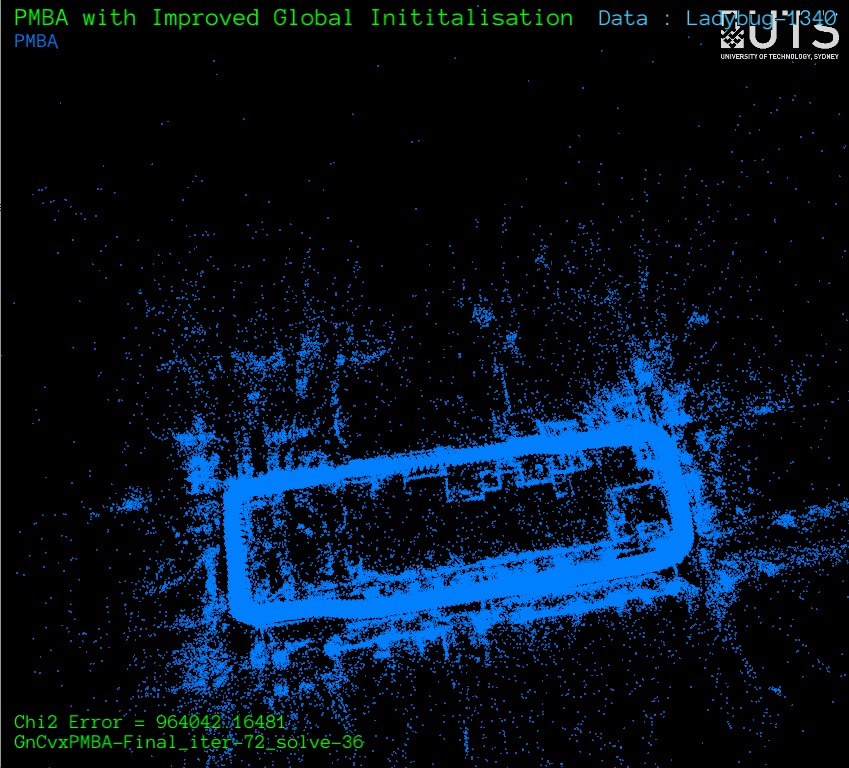} \\ 	 	 	
   	\includegraphics[width=0.24\linewidth,height=2cm]{./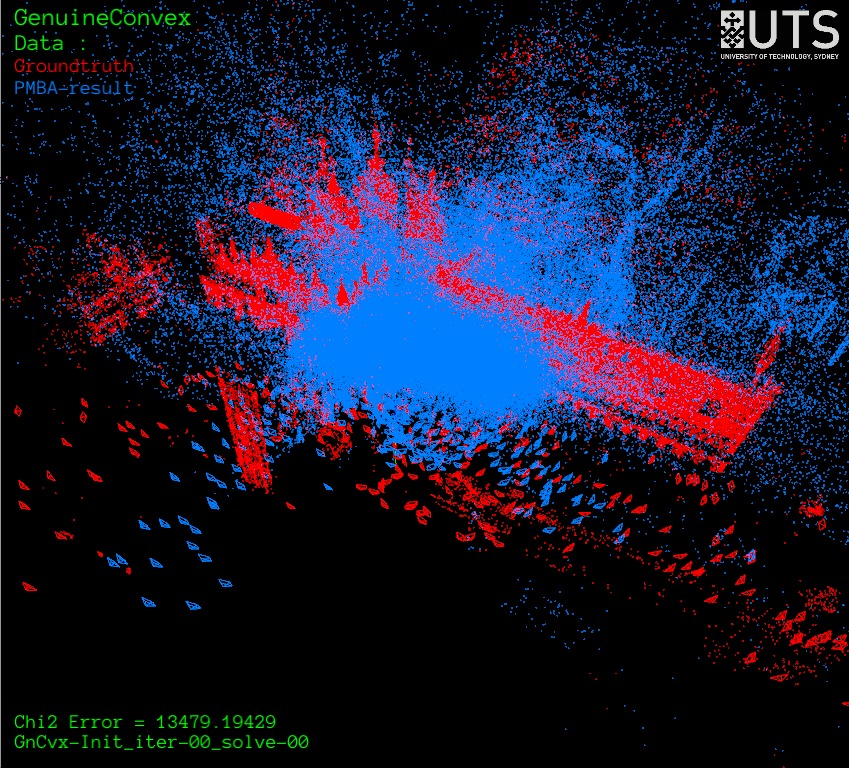} &
   	\includegraphics[width=0.24\linewidth,height=2cm]{./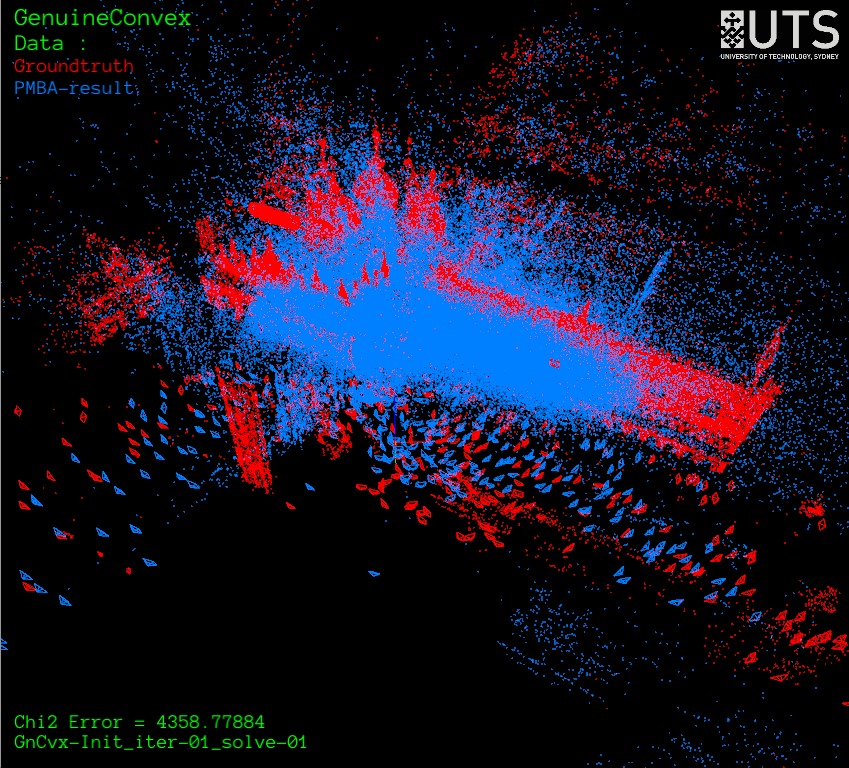} &
   	\includegraphics[width=0.24\linewidth,height=2cm]{./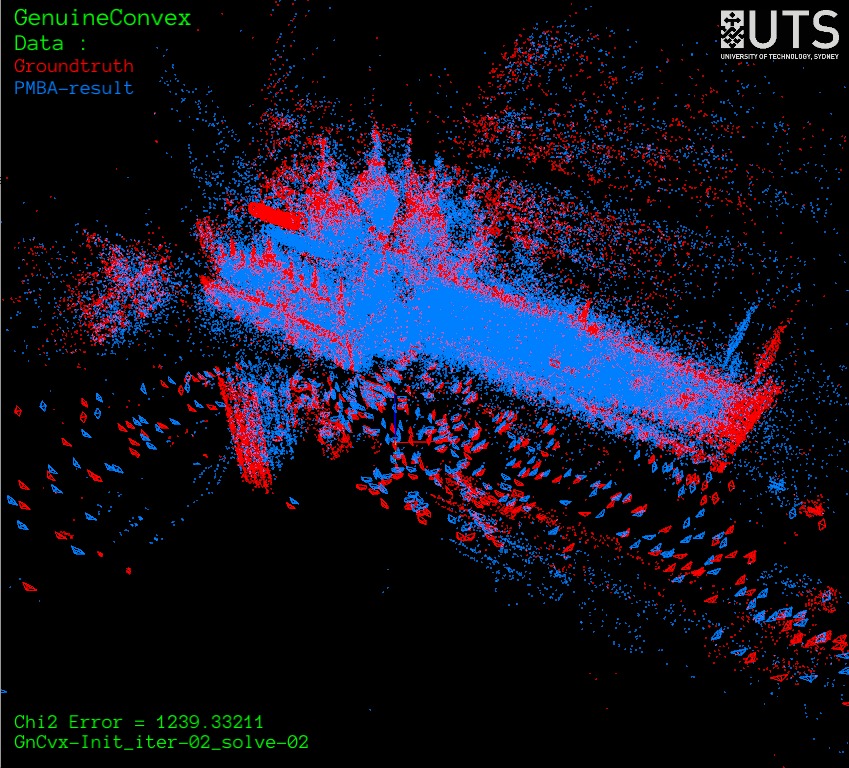} &
   	\includegraphics[width=0.24\linewidth,height=2cm]{./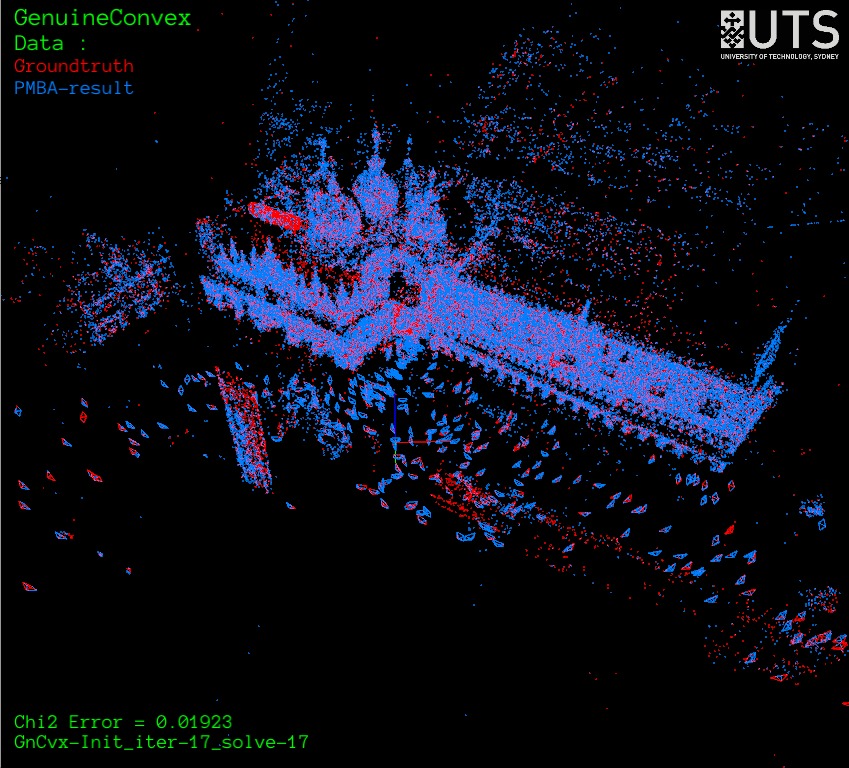} 
   	\\
   	\includegraphics[width=0.24\linewidth,height=2cm]{./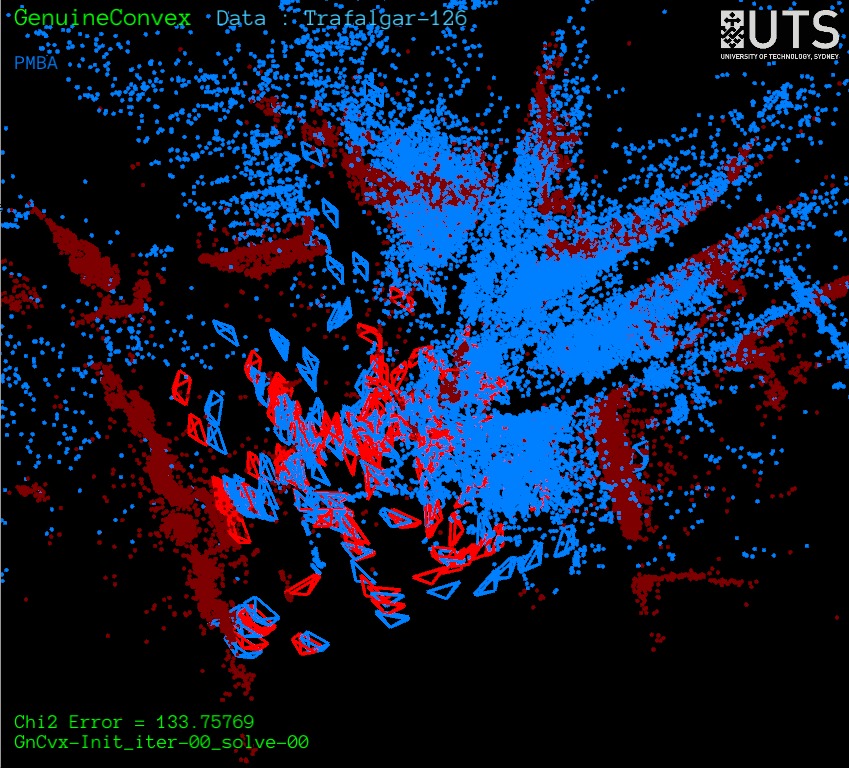} &
   	\includegraphics[width=0.24\linewidth,height=2cm]{./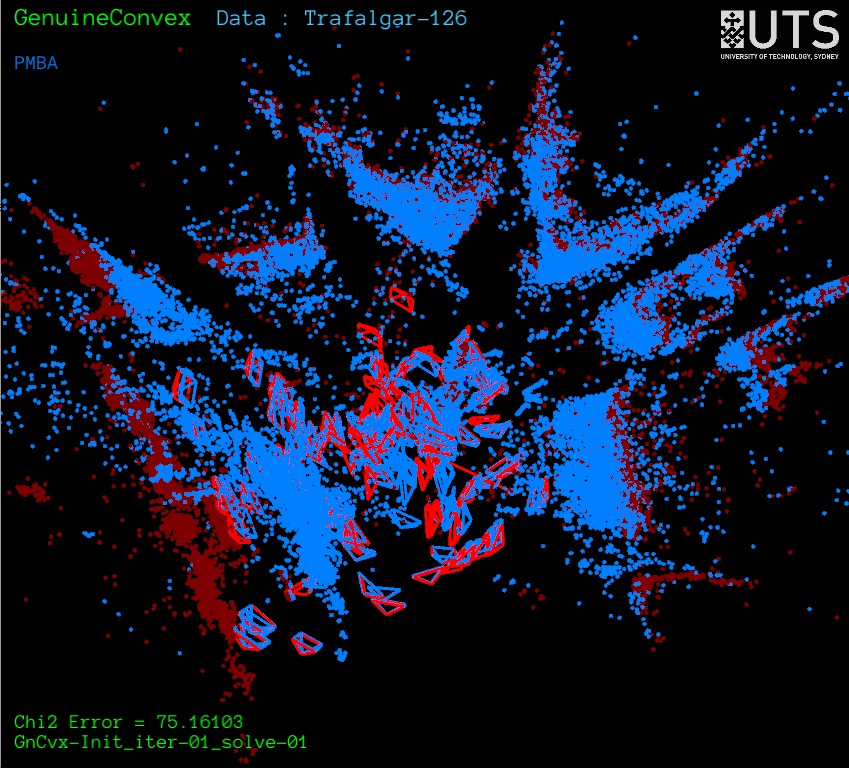} &
   	\includegraphics[width=0.24\linewidth,height=2cm]{./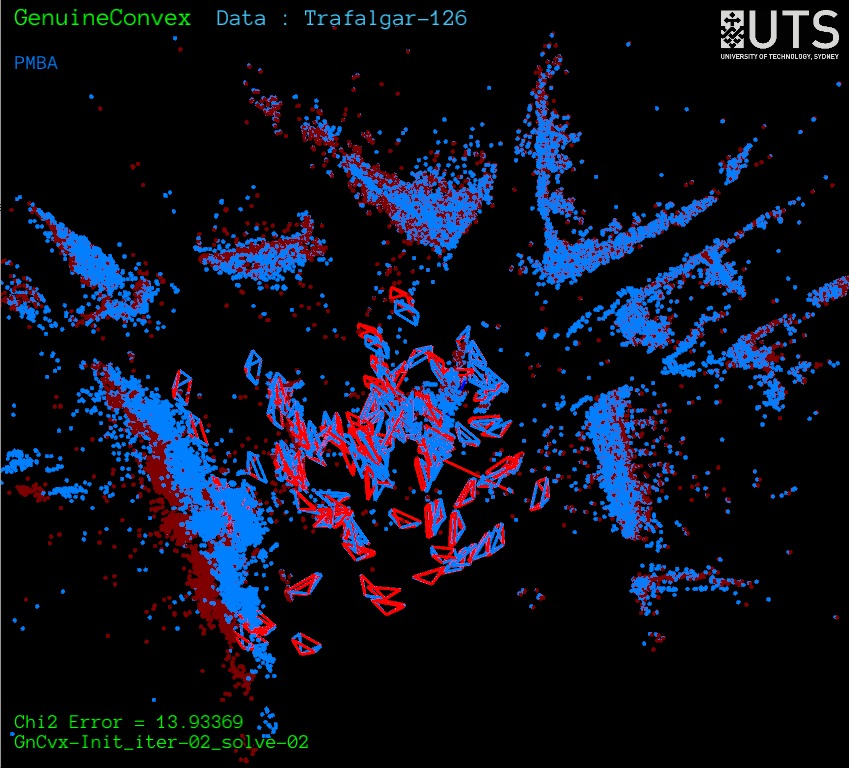} &
   	\includegraphics[width=0.24\linewidth,height=2cm]{./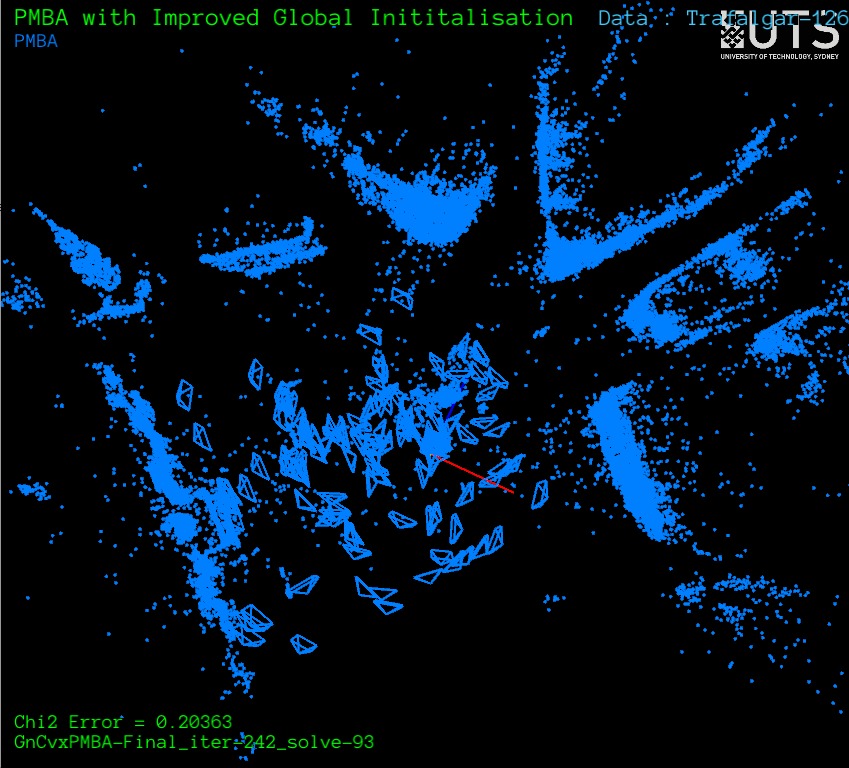} \\   	
   	\includegraphics[width=0.24\linewidth,height=2cm]{./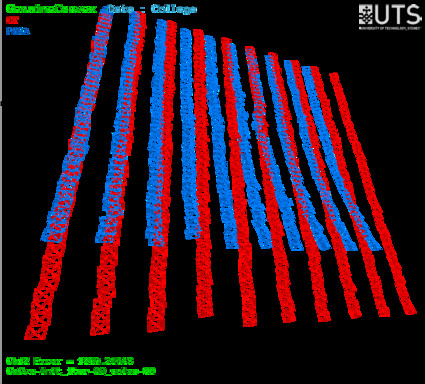} &
   	\includegraphics[width=0.24\linewidth,height=2cm]{./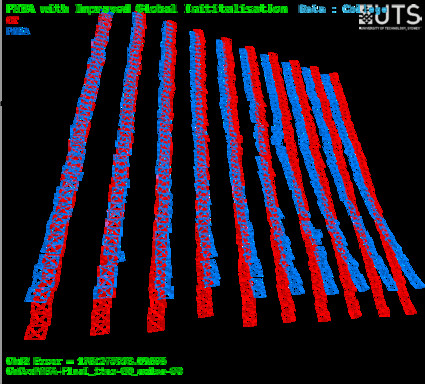} &
   	\includegraphics[width=0.24\linewidth,height=2cm]{./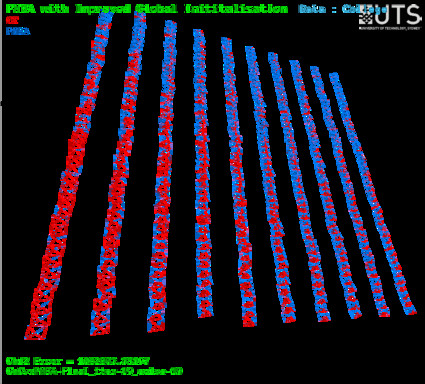} &
   	\includegraphics[width=0.24\linewidth,height=2cm]{./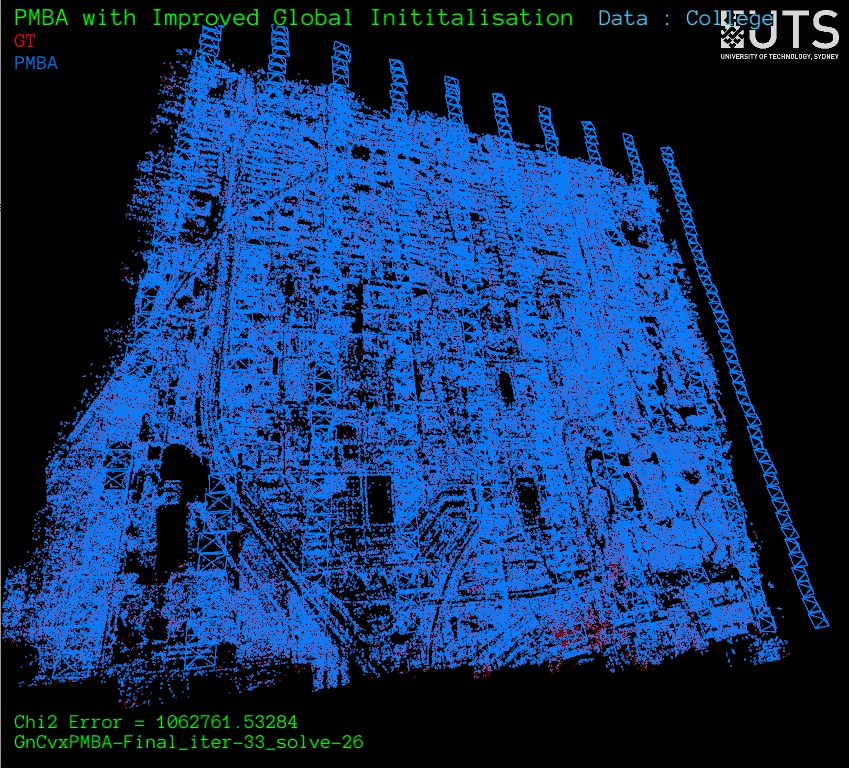} \\    
\end{tabular}	
  	\caption{
  	A demonstration of initialization to full PMBA pipeline: column 1 is QPLC results, column 2 shows selected iteration results in Convex initialization, column 3 is a typical iteration result in full-PMBA and column 4 shows the final map. Row 1: BAL-Ladybug-1370; row 2: BAL-Venice-427, row 3: BAL-Trafalgar-126, row 4: College aerial dataset}
  	\label{fig::qpcvxpmba-pipeline}	
\end{figure}

\section{Conclusion}
\label{Section::Conclusion}

In this work, we proposed a new bundle adjustment formulation (PMBA) which utilizes parallax angle based feature parametrization on manifold and observation-ray based objective function. We proved that under the new formulation the ill-conditioned cases due to problematic features can be theoretically avoided without any manual intervention, which results in much better convergence and robustness properties.
 
Furthermore, motivated by the strong local observability hidden in the visual SLAM problem, we derived a novel global initialization process for PMBA. We use a simplified model that can guarantee a near-optimal solution to bootstrap the original problem. Experimental results show that the proposed initialization can provide accurate estimates and is a viable global initialization strategy for many challenging situations including sequential and out-of-order images.
 
The promising results of the global initialization plus PMBA pipeline using publicly available datasets demonstrate that the proposed technique can deal with different challenging data. In the future, we are planning to integrate the proposed pipeline with efficient visual SLAM front-end to develop a robust and efficient SfM system.


\appendix

\subsection{The Proof of Theorem \ref{theorem::HFF}}
\label{appendix::theorem1}

Consider the feature $j$ and the corresponding sub-block matrix $\mathbf{H}_{\mathbf{FF}_j}$ in $\mathbf{H}_{\mathbf{FF}}= blkdiag(\mathbf{H}_{\mathbf{FF}_1}, \cdots, \mathbf{H}_{\mathbf{FF}_n} )$. 
Denoting $\mathbf{J}_{i,j} = \frac{\partial \mathbf{e}_{i,j}}{\partial \mathbf{F}_j} $ for ($i\in \mathbb{T}_j$), we have
\begin{equation}
	\begin{aligned}
	\mathbf{H}_{\mathbf{FF}_j} \geq \mathbf{J}_{m_j,j}^\intercal \mathbf{J}_{m_j,j} + \mathbf{J}_{a_j,j}^\intercal \mathbf{J}_{a_j,j}.
	\end{aligned}
\end{equation}
On the one hand,
\begin{equation}
	\begin{aligned}
\mathbf{J}_{m_j,j}^\intercal \mathbf{J}_{m_j,j} &= \begin{bmatrix}
0 & \mathbf{0} \\
\mathbf{0} &  (S(\mathbf{n}_j )\mathbf{A}_{\mathbf{n}_j})^\intercal (S(\mathbf{n}_j )\mathbf{A}_{\mathbf{n}_j})  
\end{bmatrix} \\&  =  \begin{bmatrix}
0 & \mathbf{0} \\
\mathbf{0} & \mathbf{I}_2 
\end{bmatrix}.
	\end{aligned}
\end{equation}
Denoting $\mathbf{a}_j =  \mathbf{p}_{m_j}-\mathbf{p}_{a_j}  $  and $ \mathbf{n}_{jw} = \mathbf{R}_{m_j}\mathbf{n}_f $, we have
\begin{equation}
\begin{aligned}
	\mathbf{N}_{j,m_a} =& \cos \theta_j \| \mathbf{a}_j \times \mathbf{n}_{jw}   \| \mathbf{n}_{jw} \\
	&+ \sin \theta_j (\mathbf{a}_j - (\mathbf{a}_j \cdot \mathbf{n}_{jw} ) \mathbf{n}_{jw} ).
\end{aligned}
\end{equation}
Note that $\mathbf{n}_{jw} \perp  (\mathbf{a}_j - (\mathbf{a}_j \cdot \mathbf{n}_{jw} ) \mathbf{n}_{jw} ) $ and 
$ \| (\mathbf{a}_j - (\mathbf{a}_j \cdot \mathbf{n}_{jw} ) \mathbf{n}_{jw} ) \| = \| \mathbf{a}_j \times \mathbf{n}_{jw}  \|: = \sin \gamma  $, thus we have 
\begin{equation}
	\begin{aligned}
  \mathbf{J}_{a_j,j}^\intercal \mathbf{J}_{a_j,j} & =  \begin{bmatrix}
  1 & \mathbf{0}\\
 \mathbf{0} & \mathbf{0}  
  \end{bmatrix}. 
	\end{aligned}
\end{equation}
Therefore, $\mathbf{H}_{\mathbf{F} \mathbf{F}_j} \geq \mathbf{I}_3 $ and $\mathbf{H}_{\mathbf{FF}} \geq \mathbf{I} $.

\subsection{The Proof of Theorem 2}
\label{appendix::theorem2}

Consider the following function
\begin{equation}
 h_{\mathbf{V}} ( \mathbf{x} ) : = 	\|  \breve{\mathbf{x}}    - \mathbf{V}  \|^2,
\end{equation}
where $\mathbf{x} \in \mathbb{R}^3$,  $\mathbf{V} \in \mathbb{R}^3$ ($\|\mathbf{V}\| = 1$).
It is a fact that
\begin{equation}
 h_{\mathbf{V} }(  \mathbf{x} + \lambda \Delta\mathbf{x}  ) \leq \max \{   h_{\mathbf{V} }(  \mathbf{x} ) ,  h_{\mathbf{V} }(  \mathbf{x}+ \Delta\mathbf{x} )  \}
 \label{eq::lemma}
\end{equation}
 for any $\Delta\mathbf{x} \in \mathbb{R}^3$ and any $ \lambda \in (0,1) $.
 
 Considering the problem (\ref{eq::optimisation3}) and the linearity of $\bar{\mathbf{N}}_{j,i}$ w.r.t. $\mathbf{p}$, (\ref{eq::optimisation3}) can be rewritten as   
 \begin{equation}
 \begin{aligned}
\min_{ \{ \mathbf{p} \} }  h( \mathbf{p}, \bar{\mathbf{R}}, \bar{\mathbf{F}}  )    & =
\min_{ \{ \mathbf{p} \} }  \sum_{ i \in \mathbb{T}_j, j } \| \frac{ \bar{\mathbf{A}}_i \mathbf{p} }{\| \bar{\mathbf{A}}_i \mathbf{p}  \|} -  \bar{\mathbf{V}}_{ij}  \|^2. \\
& = \min_{ \{ \mathbf{p} \} }  \sum_{ i \in \mathbb{T}_j, j } h_{\mathbf{V}_i}(  \bar{\mathbf{A}}_i \mathbf{p} ),
 \end{aligned}
\end{equation}
where $\bar{\mathbf{V}}_{ij}:=\bar{\mathbf{R}}_i \mathbf{v}_{ij}^{(l)}$ is a directional vector.

Denoting the global minimum of the  problem  (\ref{eq::optimisation3}) as $\bar{\mathbf{p}}$, we have 
\begin{equation}
	\begin{aligned}
&h( \bar{\mathbf{p}}+\lambda \Delta \mathbf{p}, \bar{\mathbf{R}}, \bar{\mathbf{F}}  )\\
 =& 
 \sum_{ i \in \mathbb{T}_j, j } h_{\mathbf{V}_i}(  \bar{\mathbf{A}}_i (  \bar{\mathbf{p}}+\lambda \Delta \mathbf{p}  ) )  \\
\small{(using (\ref{eq::lemma}) )} \leq &    
 \sum_{ i \in \mathbb{T}_j, j }   \max\{ h_{\mathbf{V}_i}(  \bar{\mathbf{A}}_i \bar{\mathbf{p}} ), h_{\mathbf{V}_i}(  \bar{\mathbf{A}}_i (\bar{\mathbf{p}} + \Delta \mathbf{p} ) )  \}         \\ 
 \leq &  h( \bar{\mathbf{p}} , \bar{\mathbf{R}}, \bar{\mathbf{F}}  ) +
 h( \bar{\mathbf{p}} + \Delta \mathbf{p}, \bar{\mathbf{R}}, \bar{\mathbf{F}}  )
	\end{aligned}  
\end{equation}
for any $ \Delta \mathbf{p} \in \mathbb{R}^{3M} $ and $\lambda \in (0,1)$.
The  inequality  above indicates a fact:
if we perform optimization for the problem (\ref{eq::optimisation3}), i.e., $\min_{\mathbf{p}} h(\mathbf{p}, \bar{\mathbf{R}},\bar{\mathbf{F}}  ) $
from  any initial guess $\mathbf{p} \in \mathbb{R}^{3M}$, the converged value $\mathbf{p}_{op}$ after optimization will be a near-optimal solution, i.e., 
$ h(\mathbf{p}, \bar{\mathbf{R}},\bar{\mathbf{F}}  ) \leq 2 h(\bar{\mathbf{p}}, \bar{\mathbf{R}},\bar{\mathbf{F}}  )$.
When $\bar{\mathbf{R}}$ is close to the optimal estimate,  $\mathbf{p}_{op}$ will be also a near-optimal solution of the problem (\ref{eq::optimisation2}) clearly. Under noise-free condition, $\mathbf{p}_{op}$ is an exact solution due to 
$ 0 \leq  h(\mathbf{p}, \bar{\mathbf{R}},\bar{\mathbf{F}}  ) \leq 2 h(\bar{\mathbf{p}}, \bar{\mathbf{R}},\bar{\mathbf{F}}  ) \leq 0 $ thus the  problem  (\ref{eq::optimisation3}) is convex.

\bibliographystyle{IEEEtran}
\bibliography{newRef.bib}


\end{document}